\documentclass{article}

\usepackage[preprint]{neurips_2026}
\setcitestyle{numbers,square,sort&compress}

\usepackage[utf8]{inputenc}
\usepackage[T1]{fontenc}
\usepackage{microtype}
\usepackage{nicefrac}

\usepackage{amsmath}
\usepackage{amsfonts}
\usepackage{amssymb}
\usepackage{amsthm}

\usepackage[table]{xcolor}
\definecolor{antisdrow}{rgb}{0.92, 0.96, 0.92}  
\usepackage{graphicx}
\graphicspath{{figures/}}
\usepackage{booktabs}
\usepackage{multirow}
\usepackage{enumitem}
\usepackage{algorithm}
\usepackage{algpseudocode}
\usepackage[breakable]{tcolorbox}  
\usepackage{wrapfig}               
\usepackage{fontawesome5}          

\usepackage{url}

\usepackage{hyperref}

\newtheorem{theorem}{Theorem}

\newtheorem{lemma}[theorem]{Lemma}

\newcommand{\resourcelink}[3]{%
  #1~\texttt{\href{#3}{#2}}%
}

\title{Anti-Self-Distillation for Reasoning RL via\\ Pointwise Mutual Information}

\author{%
  Guobin Shen$^{1}$  \quad Xiang Cheng$^{1}$ \quad Chenxiao Zhao$^{1}$ \quad Lei Huang$^{1}$\\[2pt]
  \textbf{Jindong Li$^{2}$ \quad Dongcheng Zhao$^{2}$ \quad Xing Yu$^{1}$}\thanks{Correspondence to: \texttt{yuanshan2@xiaohongshu.com}, \texttt{floyed\_shen@outlook.com}.} \\[4pt]
  $^{1}$Xiaohongshu Inc. \quad $^{2}$Institute of Automation, Chinese Academy of Sciences
}

\begin{document}

\maketitle

\begin{abstract}
  On-policy self-distillation, where a student is pulled toward a copy of itself conditioned on privileged context (e.g., a verified solution or feedback), offers a promising direction for advancing reasoning capability without a stronger external teacher.
  Yet in math reasoning the gains are inconsistent, even when the same approach succeeds elsewhere.
  A pointwise mutual information analysis traces the failure to the privileged context itself: it inflates the teacher's confidence on tokens already implied by the solution (structural connectives, verifiable claims) and deflates it on deliberation tokens (\emph{Wait}, \emph{Let}, \emph{Maybe}) that drive multi-step search.
  We propose Anti-Self-Distillation (AntiSD), which ascends a divergence between student and teacher rather than descending it: this reverses the per-token sign and yields a naturally bounded advantage in one step.
  An entropy-triggered gate disables the term once the teacher entropy collapses, completing a drop-in replacement for default self-distillation.
  Across five models from 4B to 30B parameters on math reasoning benchmarks, AntiSD reaches the GRPO baseline's accuracy in $2$ to $10\times$ fewer training steps and improves final accuracy by up to $11.5$ points.
  AntiSD opens a path to scalable self-improvement, where a language model bootstraps its own reasoning through its training signal.
\end{abstract}

\begin{center}\vspace{-0.6em}\small
  \resourcelink{\faGithub}{github.com/FloyedShen/AntiSD}{https://github.com/FloyedShen/AntiSD}%
  \hspace{2em}%
  \resourcelink{\faChartLine}{wandb.ai/brain-cog/AntiSD}{https://wandb.ai/brain-cog/AntiSD}
\end{center}

\section{Introduction}
\label{sec:intro}
\vspace{-0.5em}

Reinforcement learning has become a primary axis of post-training progress for reasoning tasks, with reinforcement learning from verifiable rewards (RLVR; \citealp{grpo,dapo,deepseek-r1,team2025kimi}) emerging as the dominant paradigm. The reward signal in RLVR, however, is typically a sparse, trajectory-level scalar: a single bit per rollout that does not indicate which intermediate step was responsible, leaving credit assignment to individual reasoning steps as an open problem. To address this, two main directions have emerged: training a separate process reward model (PRM) to score intermediate steps \citep{prm800k,math-shepherd,luo2024improve}, or applying on-policy distillation (OPD) to provide a token-level imitation signal from a stronger teacher \citep{gkd,fu2026revisitingonpolicydistillationempirical, lu2025onpolicydistillation}. Both, however, depend on an external model. Can the model itself supply this credit?

On-policy self-distillation answers this in the affirmative. It specializes OPD by taking the teacher to be the student itself, conditioned on privileged context: typically a verified solution and any feedback from the environment. The token-level signal is then produced by the model's own forward pass under richer conditioning, requiring neither an external teacher nor a separate reward model. A series of recent works \citep{opsd,sdpo,opcd-ye,opsdc} has developed this idea along several axes, connecting back to the older framework of learning under privileged information \citep{vapnik-lupi,lopez2015unifying}.

On math reasoning, however, the picture is more mixed. Diagnostic studies report that on-policy self-distillation can improve instruction-following, scientific QA, and tool-use tasks \citep{sdpo}, while delivering only modest or inconsistent gains on more challenging mathematical problems \citep{kim2026does}. We observe the same pattern across model families ranging from 4B to 30B parameters: on math reasoning benchmarks such as AIME 2024 and 2025, default self-distillation typically fails to outperform a strong GRPO baseline (Figure~\ref{fig:fig1}\,(b) shows one representative case; full sweep in Section~\ref{sec:exp:main}).

\begin{figure}[t]
  \centering
  \begin{minipage}[c]{0.62\linewidth}
    \centering
    \includegraphics[width=\linewidth]{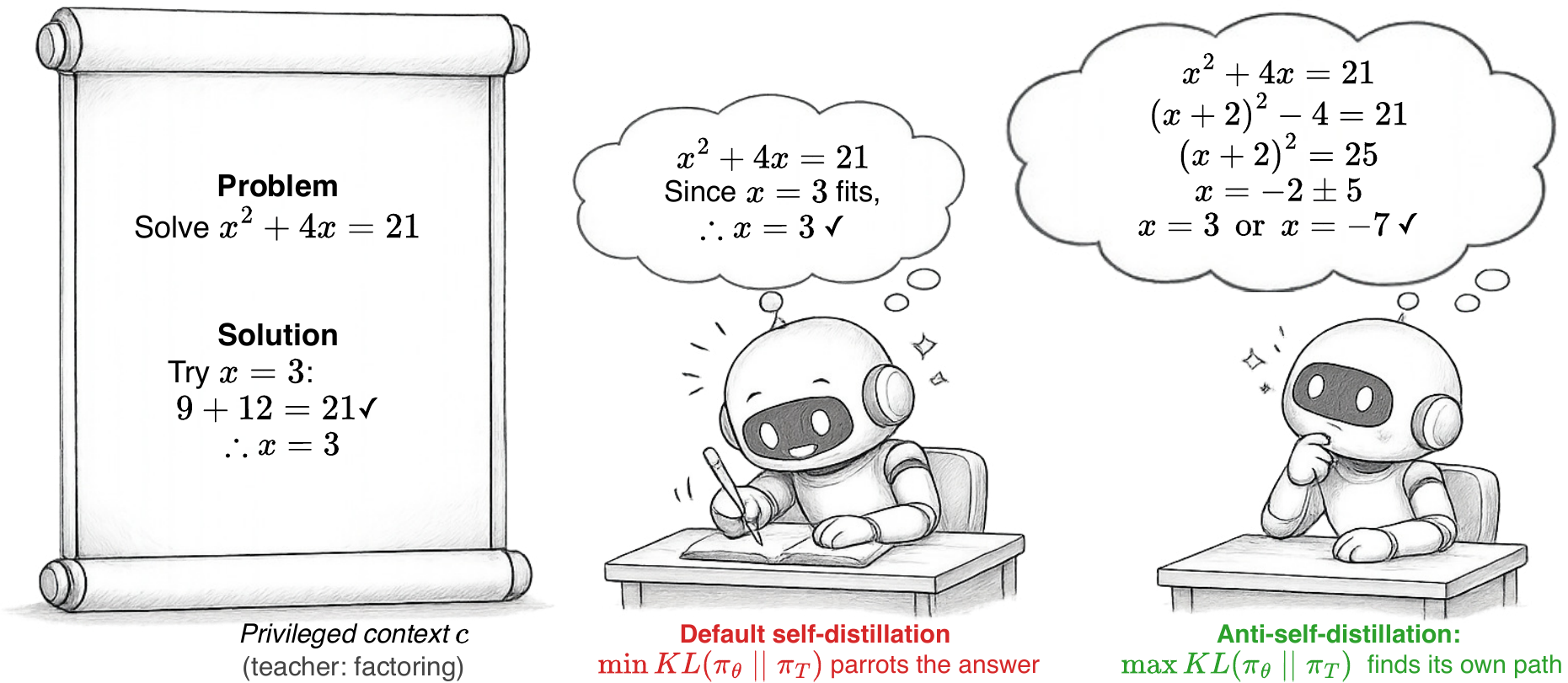}
  \end{minipage}\hfill
  \begin{minipage}[c]{0.36\linewidth}
    \centering
    \includegraphics[width=\linewidth]{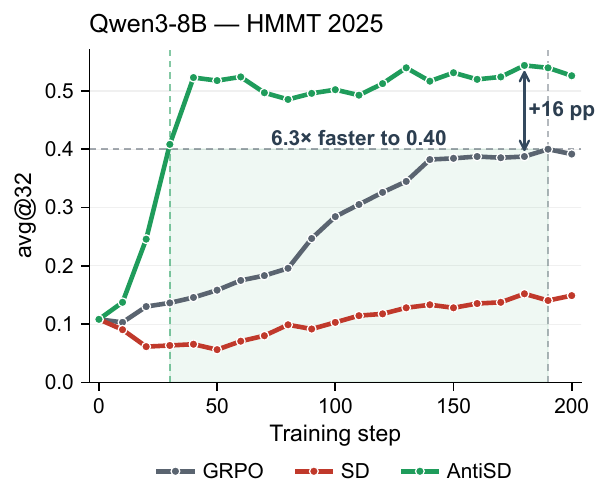}
  \end{minipage}
  \caption{\textbf{(a)} An oracle-conditioned teacher biases the student toward a single root; reversing the signal preserves deliberation and recovers the rest. \textbf{(b)} Qwen3-8B on HMMT 2025: AntiSD reaches GRPO's peak in $\sim$$1/5$ the steps and ends $+15$\,pp higher; default self-distillation underperforms GRPO.}
  \label{fig:fig1}
  \vspace{-1em}
\end{figure}

To understand the cause, we inspect the per-token signal that default self-distillation produces (Figure~\ref{fig:fig2}). The pattern points to the privileged context itself: conditioning the teacher on a verified solution effectively turns it into an oracle, leaving it confident on tokens that follow once the answer is known, such as structural connectives and verifiable-claim words, and unsure on deliberation tokens like \emph{Wait}, \emph{Let}, and \emph{Maybe} that the student emits when re-examining alternatives. Standard self-distillation pulls the student toward this oracle teacher, reinforcing tokens that track the known solution and weakening tokens that drive deliberation, as shown in Figure~\ref{fig:fig1}\,(a).

This motivates a simple fix: invert the gradient direction. We propose \emph{Anti-Self-Distillation} (AntiSD), which ascends a divergence between student and teacher rather than descending it, reversing the per-token sign and yielding a naturally bounded advantage in one step. An entropy-triggered gate disables the term once the teacher's per-token entropy collapses, completing a drop-in replacement for default self-distillation. Across five models from 4B to 30B parameters on math reasoning benchmarks, AntiSD reaches the GRPO baseline's accuracy in $2$ to $10\times$ fewer training steps and improves final accuracy by up to $11.5$ points.

Our contributions are summarized as follows:
\begin{itemize}[leftmargin=10pt,itemsep=-1pt]
  \item We expose a structural shortcut bias in standard self-distillation, where the per-token signal rewards tokens the privileged context already implies and suppresses deliberation tokens, and ground this observation in a conditional pointwise mutual information identity (Section~\ref{sec:antisd:direction}).
  \item We propose \emph{Anti-Self-Distillation} (AntiSD), which reverses the per-token signal by ascending Jensen-Shannon divergence between student and teacher; the JSD shape provides automatic bounding, leaving an entropy-triggered gate as the only practical stabilizer. AntiSD is a drop-in replacement for default self-distillation with no additional cost.
  \item Across five 4B--30B models on math and coding tasks, AntiSD matches the GRPO baseline in $2$ to $10\times$ fewer steps and adds up to $11.5$ points of final accuracy over both GRPO and default self-distillation.
\end{itemize}

\begin{figure}[t]
  \centering
  \begin{minipage}[t]{0.62\linewidth}
    \centering
    \includegraphics[width=\linewidth]{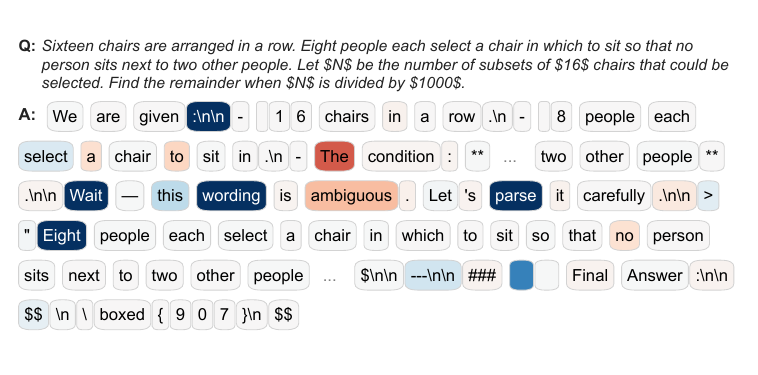}\\
  \end{minipage}\hfill
  \begin{minipage}[t]{0.38\linewidth}
    \centering
    \includegraphics[width=\linewidth]{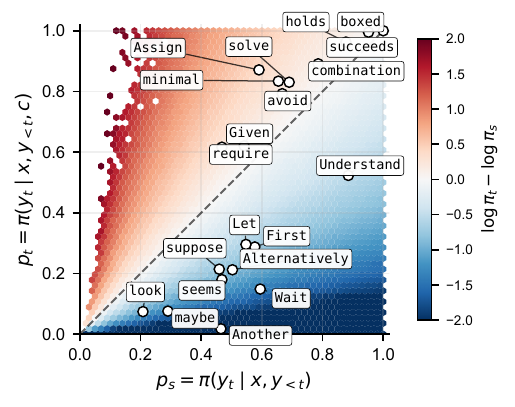}\\
  \end{minipage}
  \caption{Per-token signal $u_t = t_t - s_t$ on Qwen3-4B-IT-2507 at AIME-25. \textbf{(a)}~Single-rollout trace. \textbf{(b)}~$(\pi_S, \pi_T)$ heatmap. Blue marks deliberation tokens ($u_t \ll 0$); red marks shortcut tokens ($u_t \gg 0$).}
  \label{fig:fig2}
  \vspace{-1.5em}
\end{figure}
\section{Preliminaries}
\label{sec:prelim}
\vspace{-.5em}

\textbf{Setup.} We work with an autoregressive language model $\pi_\theta$ that, given a problem $x$, samples a trajectory $y = (y_1, \ldots, y_T)$. RLVR provides a scalar verifiable reward $R(x, y)$ scoring the final answer. Following GRPO \citep{grpo}, we sample a group of $G$ rollouts per prompt and use the group-normalized sequence-level advantage $A_i^{\mathrm{seq}} := (R_i - \mu_R)/\sigma_R$ as the policy-gradient signal for the $i$-th rollout, where $\mu_R$ and $\sigma_R$ are the within-group mean and standard deviation.

\textbf{On-policy self-distillation.} On-policy self-distillation augments the GRPO objective with a per-token signal derived from a self-teacher. Let $c$ denote privileged context (a verified solution and any environment feedback) provided at training time but not at inference. The same network $\pi_\theta$ plays two roles: the student $\pi_S(\cdot \mid x, y_{<t}) := \pi_\theta(\cdot \mid x, y_{<t})$ generates the rollout, while the teacher $\pi_T(\cdot \mid x, y_{<t}) := \pi_\theta(\cdot \mid x, c, y_{<t})$ scores it under richer conditioning (we suppress $c$ from the teacher's left-hand-side conditioning as a notational shorthand, since $c$ is fixed throughout each training step). With $\texttt{sg}[\cdot]$ denoting stop-gradient, the standard self-distillation loss is the per-token KL,
\begin{equation}
  \label{eq:sd}
  \mathcal{L}_{\mathrm{SD}}(\theta)
  \;=\;
  \mathbb{E}_{x,\, y \sim \pi_S(\cdot \mid x)}
  \Big[ \sum_{t=1}^{T} D_{\mathrm{KL}}\!\big( \pi_S(\cdot \mid x, y_{<t}) \,\big\|\, \texttt{sg}[\pi_T(\cdot \mid x, y_{<t})] \big) \Big],
\end{equation}
in addition to the GRPO objective. More generally, $\mathcal{L}_{\mathrm{SD}}$ is one member of a family of per-token f-divergences between student and teacher; the choice of $f$ shapes the resulting per-token advantage and we revisit it in Section~\ref{sec:antisd:stab}. The basic on-policy distillation formulation drops the GRPO term ($A_i^{\mathrm{seq}} \equiv 0$) and uses this per-token signal alone \citep{gkd,fu2026revisitingonpolicydistillationempirical,opsd}; recent reasoning RL methods \citep{sdpo,li2026unifying,xiao2026mimo} instead combine it with the trajectory-level reward $A_i^{\mathrm{seq}}$ through various forms (additive, multiplicative, or sample-level routing). We adopt the additive form:
\begin{equation}
  \label{eq:adv-combined}
  A_{i,t} \;=\; A_i^{\mathrm{seq}} \;+\; \lambda \cdot \delta_t,
\end{equation}
where $\delta_t$ is the per-token contribution of $-\nabla_\theta \mathcal{L}_{\mathrm{SD}}$ written in policy-gradient form (closed form in Section~\ref{sec:antisd:direction}) and $\lambda > 0$ is a mixing weight.

\section{Anti-Self-Distillation}
\label{sec:antisd}
\vspace{-.5em}

Section~\ref{sec:antisd:direction} identifies the per-token signal $\delta_t$ from Equation~\eqref{eq:adv-combined} with conditional pointwise mutual information and shows, in conjunction with Figure~\ref{fig:fig2}, that it carries a structural shortcut bias. Section~\ref{sec:antisd:stab} responds with \emph{Anti-Self-Distillation} (AntiSD), which ascends Jensen-Shannon divergence between student and teacher under a single entropy-triggered gate.

\subsection{Per-token reward as conditional PMI}
\label{sec:antisd:direction}
\vspace{-0.5em}

We abbreviate $s_t := \log \pi_S(y_t \mid x, y_{<t})$, $t_t := \log \pi_T(y_t \mid x, y_{<t})$, and $u_t := t_t - s_t$. To get a closed form for $\delta_t$ in Equation~\eqref{eq:adv-combined}, differentiate the per-token KL summand $\mathbb{E}_{v \sim \pi_S}[\log \pi_S(v) - \log \pi_T(v)]$ in Equation~\eqref{eq:sd} with respect to $\theta$. The constant-coefficient term $\mathbb{E}_v[\nabla_\theta \log \pi_S(v)]$ vanishes by the score-function identity $\sum_v \nabla_\theta \pi_S(v) = 0$, the teacher gradient is killed by the stop-gradient, and only a weighted score-function term survives (full proof in Appendix~\ref{app:proofs}, Lemma~\ref{lem:rkl-grad}):
\begin{equation}
  \nabla_\theta\, D_{\mathrm{KL}}\!\big( \pi_S(\cdot \mid x, y_{<t}) \,\big\|\, \pi_T(\cdot \mid x, y_{<t}) \big)
  \;=\;
  - \,\mathbb{E}_{v \sim \pi_S(\cdot \mid x, y_{<t})}\!\Big[\, u_v \cdot \nabla_\theta \log \pi_S(v \mid x, y_{<t}) \,\Big],
  \label{eq:thm1-grad}
\end{equation}
with $u_v := \log \pi_T(v \mid x, y_{<t}) - \log \pi_S(v \mid x, y_{<t})$. The combined advantage in Equation~\eqref{eq:adv-combined} therefore uses $\delta_t = +u_t$. Following standard policy-gradient practice for distillation, we treat the outer rollout expectation $\mathbb{E}_{y \sim \pi_S}$ as a sample-mean estimator with stop-gradient on the trajectory distribution; the trajectory-level REINFORCE term that would otherwise arise from differentiating $\pi_S(y \mid x)$ is dropped, since trajectory-level credit assignment is handled separately by the GRPO term $A_i^{\mathrm{seq}}$.

\textbf{$u_t$ as conditional PMI.} Under the self-distillation setup, $\pi_S$ and $\pi_T$ share parameters, so $u_t$ admits a closed-form interpretation:
\begin{equation}
  u_t \;=\; \log \frac{\pi_\theta(y_t \mid x, c, y_{<t})}{\pi_\theta(y_t \mid x, y_{<t})} \;=\; \mathrm{PMI}(y_t\,;\, c \mid x, y_{<t}),
  \label{eq:pmi}
\end{equation}
the conditional pointwise mutual information between the next token $y_t$ and the privileged context $c$. The sign of $u_t$ records whether $c$ raises ($u_t > 0$) or lowers ($u_t < 0$) $\pi_\theta(y_t)$. The default per-token reward $\delta_t = +u_t$ therefore rewards tokens whose probability is raised by $c$ and penalizes those it lowers; Figure~\ref{fig:fig2} makes this concrete on real data.

We compute $u_t$ on student rollouts from Qwen3-4B-IT-2507 at AIME-25, with $c$ from our self-distillation pipeline (Appendix~\ref{app:prompts}). The teacher reward splits tokens into two informative regimes. \emph{Shortcut tokens} ($u_t \gg 0$, deep red) -- \emph{Given}, \emph{Assign}, \emph{succeeds}, \emph{holds} -- are strongly rewarded once the answer is known. \emph{Deliberation tokens} ($u_t \ll 0$, deep blue) -- \emph{Wait}, \emph{Let}, \emph{Maybe}, \emph{Alternatively} -- are strongly penalized, since $c$ has committed to a solution and the teacher down-weights tokens that re-examine alternatives. Generic tokens along the diagonal and answer-template tokens near $(\pi_S, \pi_T) \approx (1,1)$ carry no signal. Figure~\ref{fig:fig2}(a) traces these regimes alternating along a single rollout, and (b) aggregates them into a $(\pi_S, \pi_T)$ heatmap with two off-diagonal lobes of opposite $u_t$ sign.

Default self-distillation thus rewards shortcut tokens and penalizes deliberation tokens. This is consistent with a phenomenon repeatedly observed under on-policy self-distillation -- responses shorten as training proceeds \citep{sdpo,kim2026does,opsdc} -- but recasts it as a structural shortcut rather than benign compression, with the suppression concentrated on the deliberation steps that drive search rather than on redundant filler. The polarity is not specific to reverse KL: for any convex $f$ in the family from Section~\ref{sec:prelim}, descent on $D_f(\pi_S \| \pi_T)$ has per-token advantage monotonically increasing in $u_t$ and inherits the same shortcut/deliberation split.

Two empirical observations from this analysis will drive the method. \emph{(O1) Wrong polarity for reasoning}: the per-token reward $\delta_t = +u_t$ has the wrong sign -- rewarding shortcut tokens and penalizing the deliberation tokens that drive search. \emph{(O2) Asymmetric distribution}: because rollouts come from $\pi_S$, tokens with $\pi_S > \pi_T$ are over-sampled in the batch -- visible in Figure~\ref{fig:fig2}(b) as the heavier deliberation lobe ($u_t < 0$), with individual tokens in the tail reaching $u_t \le -20$ (Figure~\ref{fig:fig2}(a)).

\subsection{Ascent on Jensen-Shannon divergence}
\label{sec:antisd:stab}
\vspace{-0.5em}

AntiSD has three components. From (O1), we \emph{reverse the gradient direction} (descent $\to$ ascent), flipping the per-token reward at the source. From (O2), we \emph{ascend Jensen-Shannon divergence} rather than reverse KL: JSD's f-divergence-derived advantage is asymmetrically bounded (capped on the over-sampled deliberation side and linear on the under-sampled shortcut side), directly counterbalancing the empirical asymmetry. The third component, an \emph{entropy-triggered gate}, follows from the first two: once we ascend a divergence, the policy gradient is no longer self-terminating, so we need a signal-quality criterion to disable the term once $\pi_T$'s information about $\pi_S$ degenerates. We make each concrete below.

\textbf{JSD ascent.}
Writing $D_{\mathrm{JSD}}(\pi_S \| \pi_T) = \mathbb{E}_{\pi_T}[f(\pi_S/\pi_T)]$ for the corresponding f-divergence generator, the score-function trick (analogous to Equation~\eqref{eq:thm1-grad}) gives
\begin{equation}
  \label{eq:fdiv-grad}
  \nabla_\theta D_{\mathrm{JSD}}(\pi_S \| \pi_T) \;=\; \mathbb{E}_{v \sim \pi_S}\!\left[ f'\!\left(\tfrac{\pi_S(v)}{\pi_T(v)}\right) \, \nabla_\theta \log \pi_S(v) \right].
\end{equation}
Substituting $\pi_S/\pi_T = e^{-u}$ identifies $f'(\pi_S/\pi_T) = -\varphi(u)$ (full simplification in Appendix~\ref{app:proofs}), so ascending JSD via policy gradient has per-token advantage
\begin{equation}
  \label{eq:antisd-advantage}
  A_t^{\mathrm{AntiSD}} \;=\; -\varphi(u_t), \qquad \varphi(u) \;:=\; \tfrac{1}{2}\!\left(\mathrm{softplus}(u) - \log 2\right).
\end{equation}
The shape $\varphi$ is the f-divergence derivative for $D_{\mathrm{JSD}}$, so its monotonicity and sign-preservation follow from JSD's convexity. At small $|u|$, $\varphi'(0) = \tfrac{1}{2}\sigma(0) = \tfrac{1}{4}$ gives $-\varphi(u) \approx -\tfrac{1}{4} u$, which recovers ascent on reverse KL up to a positive scalar; the two choices diverge in the tails, where $\varphi(u) \ge -\tfrac{1}{2}\log 2$ globally (proof in Appendix~\ref{app:proofs}) caps the AntiSD advantage on the deliberation side at $\tfrac{1}{2}\log 2$. This is exactly the side that (O2) flagged as both over-sampled and heavy-tailed: the cap absorbs the $u_t \le -20$ spikes and rebalances per-token gradient contributions against the lighter, under-sampled shortcut side, while the shortcut side keeps its linear penalty since extreme shortcut tokens are precisely the ones AntiSD should suppress proportionally. We ablate the divergence choice in Section~\ref{sec:exp:ablation}.

\textbf{Entropy-triggered gate.} The JSD ascent direction is not self-terminating, so we need a criterion to disable the term once $u_t$ stops carrying useful conditional information. The teacher's per-token entropy aggregated over the batch, $H := \mathrm{median}_{i,t}\, H[\pi_T(\cdot \mid x_i, y_{i,<t})]$, provides this signal. The log-ratio $u_t = \log(\pi_T/\pi_S)$ is well-conditioned only as long as $\pi_T$ retains substantial entropy: when $\pi_T$ collapses to a near-deterministic mode (low $H$), most tokens lie at floor probability under $\pi_T$ and $u_v$ becomes dominated by numerical floor rather than conditional information. We disable the AntiSD term when $H$ falls below an auto-calibrated threshold $\tau_{\mathrm{down}}$, and re-enable it once $H$ recovers to its pre-collapse baseline $H_{\mathrm{warm}}$ (a Schmitt trigger to avoid chatter):
\begin{equation}
  g \leftarrow
  \begin{cases}
    1 & \text{if } g=0 \text{ and } H \ge H_{\mathrm{warm}},  \\
    0 & \text{if } g=1 \text{ and } H < \tau_{\mathrm{down}}, \\
    g & \text{otherwise,}
  \end{cases}
  \qquad \lambda = g \cdot \lambda_{\max}.
\end{equation}
$\tau_{\mathrm{down}}$ is auto-calibrated from $W$ warmup steps at $\lambda=0$ (concrete values in Section~\ref{sec:exp} Setup). Algorithm~\ref{alg:antisd} (Appendix~\ref{app:hparams}) summarizes the resulting update.

\section{Experiments}
\label{sec:exp}
\vspace{-0.5em}


\textbf{Setup.} We train five language models from the Qwen3~\citep{yang2025qwen3} and Olmo-3~\citep{olmo2025olmo3} families (4B--30B parameters) on DAPO-Math-17k~\citep{dapo} for 200 on-policy steps, comparing four conditions per model: the un-trained base, +GRPO (Equation~\eqref{eq:adv-combined} with $\lambda=0$), +SD (default self-distillation, $\delta_t = +u_t$), and +AntiSD (Algorithm~\ref{alg:antisd}). The privileged context $c$ is a verified solution sampled from the rollout group when at least one rollout is correct, else from the dataset, concatenated with a binary correctness feedback string. AntiSD's gate is auto-calibrated from the first $5$ training steps (run at $\lambda=0$): we record the median teacher entropy $H_{\mathrm{warm}}$ and set $\tau_{\mathrm{down}} = 0.93\,H_{\mathrm{warm}}$, with the gate re-enabling once $H$ recovers to $H_{\mathrm{warm}}$. The $0.93$ multiplier is shared across all model families, requiring no per-model tuning. Held-out evaluation reports avg@$32$ on AIME 2024~\citep{aime24} / 2025~\citep{aime25} / 2026~\citep{aime26} and HMMT 2025~\citep{dekoninck2026matharena}, and avg@$4$ on MinervaMath~\citep{lewkowycz2022solving}. Full model list, sampling settings, gate-calibration details, and example teacher prompts are in Appendix~\ref{app:hparams} and~\ref{app:prompts}.

\subsection{Main results}
\label{sec:exp:main}
\vspace{-0.5em}

\begin{table}[ht]
  \caption{\textbf{Main results} (accuracy \%). AIME24/25/26 and HMMT25: avg@$32$; Minerva: avg@$4$. Subscript on \emph{Avg} = peak-mean step; \emph{Speedup} = GRPO's best-Avg step / this row's first-reach step ($\times$: never reached). \textbf{Bold} = column best within each model block; highlighted row is canonical AntiSD.}
  \vspace{-0.5em}
  \label{tab:main}
  \centering
  \small
  \setlength{\tabcolsep}{5pt}
  \renewcommand{\arraystretch}{0.95}
  \begin{tabular}{l|c c c c c|c|c}
    \toprule
    Method                             & \textbf{AIME24} & \textbf{AIME25} & \textbf{AIME26} & \textbf{HMMT25} & \textbf{Minerva} & \textbf{Average}                  & \textbf{Speedup}      \\
    \midrule
    \textit{Qwen3-8B}                  & 25.5            & 21.3            & 17.0            & 10.8            & 39.0             & 22.7                              & --                    \\
    \quad +GRPO                        & 73.5            & 65.2            & 64.2            & 39.2            & 45.1             & 57.4\textsubscript{@200}          & 1.0$\times$           \\
    \quad +SD                          & 40.1            & 30.5            & 26.9            & 14.9            & 40.7             & 30.6\textsubscript{@200}          & $\times$              \\
    \rowcolor{antisdrow} \quad +AntiSD & \textbf{78.4}   & \textbf{73.4}   & \textbf{73.7}   & \textbf{54.4}   & \textbf{48.5}    & \textbf{65.7\textsubscript{@180}} & \textbf{5.0$\times$}  \\
    \midrule
    \textit{Qwen3-4B-IT-2507}          & 62.1            & 48.2            & 53.8            & 30.4            & 43.4             & 47.6                              & --                    \\
    \quad +GRPO                        & 67.8            & 57.7            & 63.5            & 34.1            & 33.2             & 51.3\textsubscript{@200}          & 1.0$\times$           \\
    \quad +SD                          & 59.8            & 45.8            & 52.0            & 28.8            & 43.0             & 45.9\textsubscript{@10}           & $\times$              \\
    \rowcolor{antisdrow} \quad +AntiSD & \textbf{76.6}   & \textbf{70.2}   & \textbf{74.4}   & \textbf{46.7}   & \textbf{46.4}    & \textbf{62.8\textsubscript{@100}} & \textbf{10.0$\times$} \\
    \midrule
    \textit{Olmo3-7B-IT}               & 53.3            & 39.7            & 44.4            & 25.9            & 38.7             & 40.4                              & --                    \\
    \quad +GRPO                        & 57.0            & 45.3            & 52.1            & 31.2            & 29.1             & 43.0\textsubscript{@190}          & 1.0$\times$           \\
    \quad +SD                          & 54.5            & 41.8            & 46.6            & 24.4            & 38.5             & 41.1\textsubscript{@10}           & $\times$              \\
    \rowcolor{antisdrow} \quad +AntiSD & \textbf{62.4}   & \textbf{49.1}   & \textbf{55.2}   & \textbf{32.3}   & \textbf{42.4}    & \textbf{48.3\textsubscript{@200}} & \textbf{9.5$\times$}  \\
    \midrule
    \textit{Olmo3-7B-TK}               & 74.6            & 68.7            & 73.3            & 48.2            & 45.2             & 62.0                              & --                    \\
    \quad +GRPO                        & 76.5            & 71.7            & 75.3            & 50.5            & \textbf{46.4}    & 64.1\textsubscript{@80}           & 1.0$\times$           \\
    \quad +SD                          & 77.2            & 68.5            & 74.0            & 48.3            & 44.8             & 62.6\textsubscript{@40}           & $\times$              \\
    \rowcolor{antisdrow} \quad +AntiSD & \textbf{77.6}   & \textbf{75.3}   & \textbf{76.1}   & \textbf{56.2}   & 45.8             & \textbf{66.2\textsubscript{@40}}  & \textbf{2.0$\times$}  \\
    \midrule
    \textit{Qwen3-30B-A3B}             & 28.1            & 22.3            & 21.8            & 11.7            & 43.5             & 25.5                              & --                    \\
    \quad +GRPO                        & 74.1            & 66.8            & 65.5            & 42.2            & 47.1             & 59.1\textsubscript{@150}          & 1.0$\times$           \\
    \quad +SD                          & 40.9            & 32.5            & 34.0            & 20.9            & 44.1             & 34.5\textsubscript{@60}           & $\times$              \\
    \rowcolor{antisdrow} \quad +AntiSD & \textbf{79.4}   & \textbf{75.6}   & \textbf{74.1}   & \textbf{55.2}   & \textbf{49.7}    & \textbf{66.8\textsubscript{@140}} & \textbf{2.9$\times$}  \\
    \bottomrule
  \end{tabular}
\end{table}

Table~\ref{tab:main} reports avg@32 at each (model, method)'s best-Avg checkpoint. Three patterns hold:
\emph{(i) AntiSD reaches GRPO's accuracy in a fraction of the steps}, with a speedup of $2$--$10\times$ across all five models. The largest speedups appear on the smaller models with weaker GRPO baselines (Qwen3-4B-IT-2507 $10\times$, Olmo3-7B-IT $9.5\times$, Qwen3-8B $5\times$); the speedup shrinks but stays positive on the two strongest baselines (Olmo3-7B-TK $2\times$, where GRPO already sits at $64.1$; Qwen3-30B-A3B $2.9\times$, the $30$B mixture-of-experts model). This early ignition is consistent with the diagnosis in Section~\ref{sec:antisd:direction}: the per-token reward $-\varphi(u_t)$ is informative from the first step, so credit-assignment does not have to wait for sparse trajectory-level reward to propagate through the policy.
\emph{(ii) AntiSD's final mean accuracy exceeds GRPO's on every model}, by $+2.1$ to $+11.5$ points (Avg). The gap is widest on the weaker baselines ($+5.3$ to $+11.5$ on Qwen3-8B, Qwen3-4B-IT-2507, Olmo3-7B-IT), still substantial at scale ($+7.7$ on Qwen3-30B-A3B), and narrowest on the strongest GRPO baseline Olmo3-7B-TK ($+2.1$), where GRPO at $64.1$ and the un-trained base at $62.0$ leave little headroom on DAPO-Math-17k. Per-benchmark, $4$ of $5$ models win on every individual benchmark; the lone near-tie is a $-0.6$\,pp gap on MinervaMath for Olmo3-7B-TK, ruling out the explanation that one easy benchmark inflates the mean. The gain matches our prediction that biasing optimization toward deliberation tokens unlocks problems that GRPO's sparse signal cannot reach.
\emph{(iii) Default self-distillation underperforms the GRPO baseline} on every model, often by a wide margin (Qwen3-8B Avg: $30.6$ vs $57.4$). The mechanism behind this collapse, and the entropy dynamics that distinguish it from AntiSD, are examined in Section~\ref{sec:exp:dyn}.

\begin{figure}[ht]
  \centering
  \includegraphics[width=\linewidth]{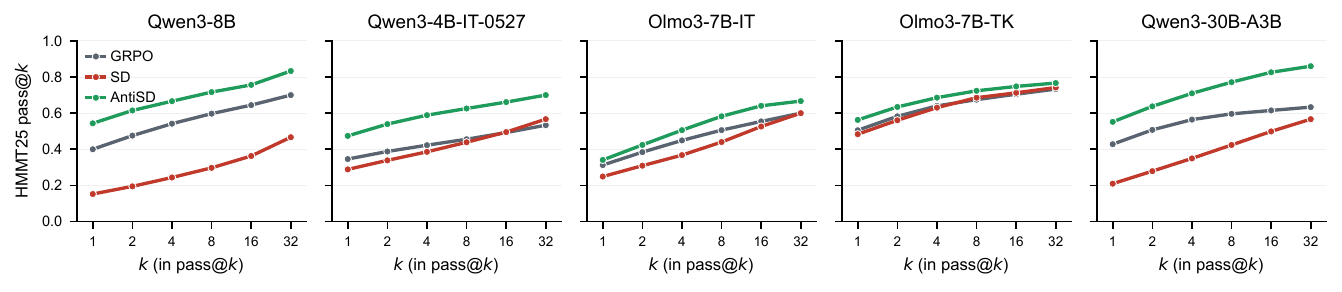}
  \vspace{-1em}
  \caption{{HMMT25 pass@$k$} at each row's peak-mean snapshot. AntiSD's lead over GRPO is sustained across $k$ -- the gain reflects expanded coverage, not just variance reduction.}
  \label{fig:pass_k}
  \vspace{-.5em}
\end{figure}

A natural concern is whether AntiSD's gain comes from better single-rollout accuracy or from concentrating probability mass on already-correct rollouts at the cost of generation diversity. Figure~\ref{fig:pass_k} plots pass@$k$ on HMMT 2025 (the hardest of the five benchmarks) to disentangle these. AntiSD's lead over GRPO is sustained across $k$: on Qwen3-8B the gap is $\sim 13$ points at $k=1$ and remains $\sim 7$--$10$ points at $k=32$. The non-converging curves at high $k$ indicate that AntiSD genuinely solves problems that GRPO cannot reach even with 32 attempts and preserves the rollout diversity needed to do so, rather than trading diversity for single-rollout consistency.

\begin{wraptable}{r}{0.40\linewidth}
  \vspace{-1.5em}
  \centering
  \small
  \setlength{\tabcolsep}{6pt}
  \renewcommand{\arraystretch}{0.95}
  \caption{{Code reasoning on Qwen3-8B} (avg@$10$). \textbf{Bold} marks the best per column.}
  \label{tab:code}
  \begin{tabular}{l|cc}
    \toprule
    \textbf{Method}              & \textbf{HumanEval+} & \textbf{MBPP+} \\
    \midrule
    +GRPO                        & 40.4                & 61.0           \\
    \rowcolor{antisdrow} +AntiSD & \textbf{41.6}       & \textbf{63.3}  \\
    \bottomrule
  \end{tabular}
  \vspace{-1.0em}
\end{wraptable}

\textbf{Code reasoning.} To probe whether AntiSD generalises beyond math, we run the same on-policy self-distillation setup on the Dolci-RLZero code RL dataset \citep{olmo2025olmo3} and evaluate on HumanEval+ and MBPP+~\citep{liu2023your} (Table~\ref{tab:code}). On Qwen3-8B, AntiSD improves over the GRPO baseline by $+1.2$ points on HumanEval+ and $+2.3$ on MBPP+; the gains are smaller than on math reasoning but consistent in direction, indicating that the per-token mechanism transfers to a setting where the trajectory-level reward is itself denser.

\subsection{Training dynamics}
\label{sec:exp:dyn}
\vspace{-0.5em}

Figure~\ref{fig:dyn} traces six training-time signals through the run. AntiSD ignites earliest: truncation-corrected train reward climbs from $\sim 0.5$ to $\sim 0.95$ within $\sim 30$ steps on Qwen3-8B and Qwen3-4B-IT-2507, a regime GRPO reaches only after $\sim 150$ steps and SD never reaches, with HMMT25 and AIME25 moving in lockstep. The Qwen3-4B-IT-2507 plateau sits near $0.95$ rather than $1.0$ and drifts slightly late in training; held-out accuracy does not drop, so this is saturation against the DAPO-Math problem distribution -- once almost every sampled problem is solved, the surviving gradient signal is noise -- rather than overfitting.

\begin{figure}[ht]
  \centering
  \includegraphics[width=\linewidth]{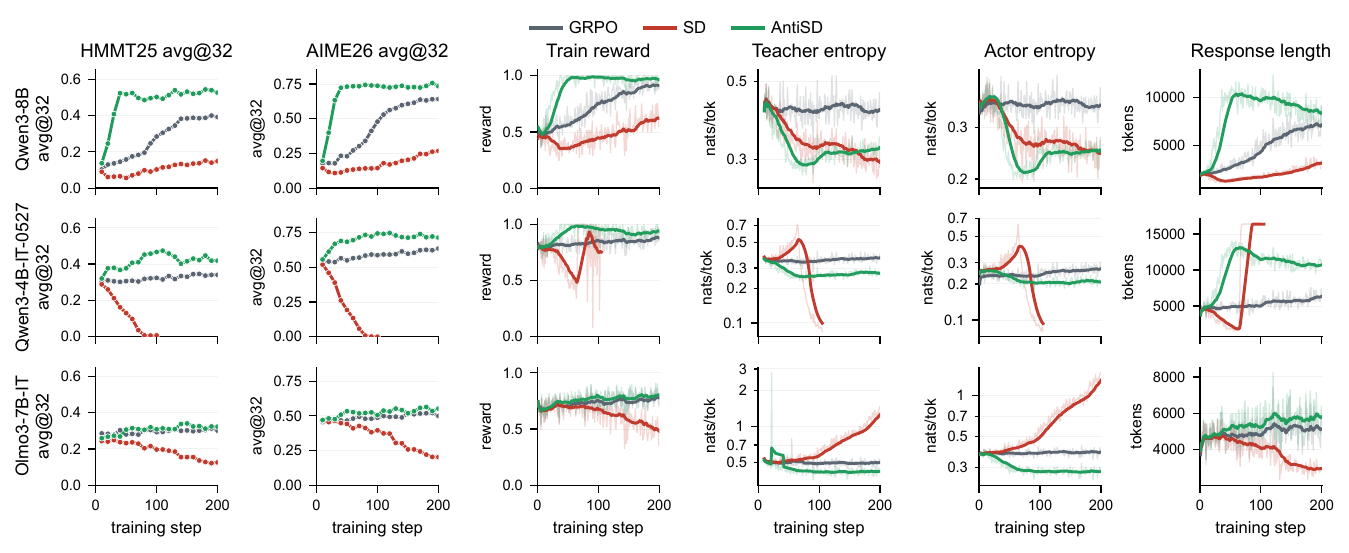}
  \caption{\textbf{Training dynamics} on three models (rows) along six axes (columns). Faded traces are raw values; bold traces are $20$-step rolling means.}
  \label{fig:dyn}
  \vspace{-0.5em}
\end{figure}

Default self-distillation diverges in opposite directions across model families. Both AntiSD and SD couple the student and teacher distributions, but their entropy traces tell different stories: AntiSD remains in a stable middle band on all three models, while SD's teacher and actor entropy collapse toward $\sim 0.1$ nats per token on Qwen3-4B-IT-2507 (over-confident on the shortcut answer template) and inflate past $1$ nat per token on Olmo3-7B-IT (drift away from useful tokens). This is exactly the bidirectional failure mode that the sign reversal in Section~\ref{sec:antisd:stab} addresses; the same shortcut bias that explains SD's gap to GRPO in Table~\ref{tab:main} is what is visibly amplifying or eroding teacher entropy here. Its sharpest expression is the Qwen3-4B-IT-2507 collapse around step $80$: train reward to zero, response length pinned at the $32$K cap, and both entropies spiking, all within a single step window before the run terminates.

\subsection{Ablations}
\label{sec:exp:ablation}
\vspace{-0.5em}

AntiSD adds three components on top of the GRPO advantage: sign-reversed reward $-\varphi(u_t)$, the JSD/softplus shape, and an entropy-triggered gate. Sign reversal is the dominant lever and was already established in Table~\ref{tab:main}: removing it (default SD) drops Qwen3-8B Avg from $65.7$ to $30.6$. We focus the remaining ablations on the other two components and on the privileged context itself, reporting both training-curve health (does the run survive?) and held-out accuracy on Qwen3-4B-IT-2507, the most failure-prone model in our suite.

\begin{figure}[ht]
  \centering
  \includegraphics[width=\linewidth]{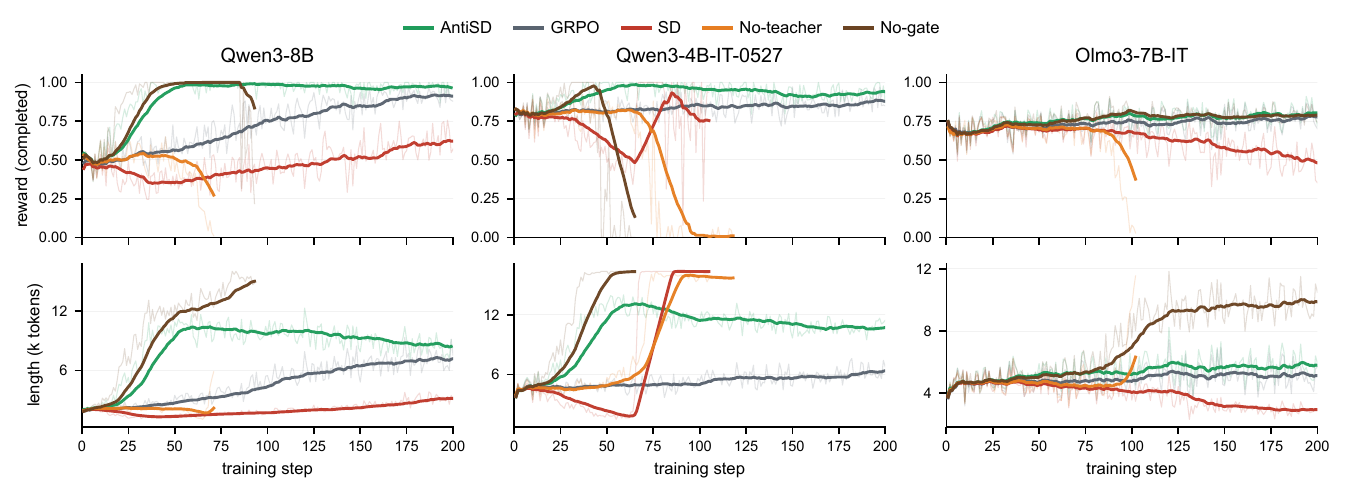}
  \caption{\textbf{Mechanism ablations: failure modes.} Top: reward over non-truncated rollouts. Bottom: mean response length. Line truncation indicates run termination after collapse.}
  \label{fig:fm}
  \vspace{-.5em}
\end{figure}

\textbf{No-teacher: self-reinforcement collapse.} Removing the teacher entirely -- so the per-token signal becomes a function of the student's log-probability alone, with no teacher--student differential -- collapses on all three models within $\sim 70$ training steps (Figure~\ref{fig:fm}, orange). Without external information from the privileged context, the per-token term degenerates into a function of the student's own probability, producing a positive-feedback signal that reinforces whatever the policy already emits; this is a textbook self-reinforcement collapse and is the strongest evidence in our suite that AntiSD's gain depends on the privileged information identity from Section~\ref{sec:antisd:direction}, not on a generic shaping of student log-probabilities. This contrasts with recent self-reward methods \citep{zhao2025learning,he2026far}, which keep an external signal (typically majority-vote agreement across rollouts) rather than removing all conditioning. Our No-teacher variant strips the privileged context entirely, leaving only $\pi_S$ in the loss, and that is precisely the configuration that fails to learn. AntiSD's privileged-context conditioning instead preserves rollout diversity (Figure~\ref{fig:pass_k}: sustained pass@$k$ lead over GRPO out to $k=32$), ruling out the mode-collapse-onto-majority failure that self-reward methods often incur.

\textbf{No-gate: stabilization is model-conditional.} Removing the entropy gate (always-on PRM) is the most striking model-dependent failure (Figure~\ref{fig:fm}, brown). On Qwen3-8B and Qwen3-4B-IT-2507 the No-gate run actually \emph{ignites faster} than canonical AntiSD -- reward peaks $\sim 0.97$ around step $40$ -- before collapsing near step $90$ as the teacher's per-token entropy crosses zero, at which point the PRM signal degenerates because the teacher has absorbed the answer template. On Olmo3-7B-IT the same configuration survives the full $200$ steps and even attains the highest plateau on this model. The asymmetry tracks initial teacher entropy: Qwen models start at $\approx 0.4$ nats per token (close to the absorption threshold) while Olmo starts higher, leaving headroom that the gate would have monitored. Read together with Section~\ref{sec:antisd:stab}, the gate acts as a cross-model insurance policy rather than a per-model necessity; it is essential for Qwen and inert for Olmo.

\begin{table}[ht]
  \caption{\textbf{Component sensitivity on Qwen3-4B-IT-2507.} Single-knob deviations from canonical AntiSD (highlighted). \emph{Speedup} as in Table~\ref{tab:main}. \textbf{Bold} = column best among +AntiSD rows.}
  \vspace{-0.5em}
  \label{tab:ablation}
  \centering
  \small
  \setlength{\tabcolsep}{1.5pt}
  \renewcommand{\arraystretch}{0.95}
  \begin{tabular}{l|ccc|cccccc|c}
    \toprule
    \textbf{Method}          & \textbf{Div} & $\boldsymbol{\tau_{\mathrm{down}}}$ & \textbf{Compose}             & \textbf{AIME24} & \textbf{AIME25} & \textbf{AIME26} & \textbf{HMMT25} & \textbf{Minerva} & \textbf{Average}                                & \textbf{Speedup}      \\
    \midrule
    GRPO                     & --           & --                                  & --                           & 67.8            & 57.7            & 63.5            & 34.1            & 33.2             & 51.3\textsubscript{@200}                        & 1.0$\times$           \\
    \midrule
    \multirow{7}{*}{+AntiSD} & rev.~KL      & $0.93$                              & add.                         & 64.1            & 49.0            & 58.9            & 32.0            & 43.7             & 49.5\textsubscript{@10}                         & $\times$              \\
    \cmidrule{2-11}
                             & JSD          & none                                & add.                         & 72.4            & 66.5            & 72.6            & 44.9            & 46.9             & 60.6\textsubscript{@30}                         & \textbf{10.0$\times$} \\
                             & JSD          & $0.90$                              & add.                         & 68.8            & 56.5            & 63.1            & 39.6            & 44.6             & 54.5\textsubscript{@20}                         & \textbf{10.0$\times$} \\
                             & JSD          & $0.95$                              & add.                         & \textbf{77.6}   & 69.4            & 63.4            & 44.7            & 46.2             & 60.3\textsubscript{@110}                        & 6.7$\times$           \\
    \cmidrule{2-11}
                             & JSD          & $0.93$                              & mult.                        & 70.8            & 61.5            & 67.6            & 36.6            & 46.1             & 56.5\textsubscript{@170}                        & 5.0$\times$           \\
                             & JSD          & $0.93$                              & add.\textsuperscript{$\ast$} & 74.8            & 65.3            & 70.1            & 39.4            & \textbf{51.8}    & 60.3\textsubscript{@20}\textsuperscript{$\ast$} & \textbf{10.0$\times$} \\
    \cmidrule{2-11}
    \rowcolor{antisdrow}{}   & {JSD}        & {0.93}                              & {add.}                       & 76.6            & \textbf{70.2}   & \textbf{74.4}   & \textbf{46.7}   & 46.4             & \textbf{62.8\textsubscript{@100}}               & \textbf{10.0$\times$} \\
    \bottomrule
  \end{tabular}
  \par\smallskip
  {\footnotesize $\ast$ Gate signal swapped from teacher- to student-perplexity.}
  \vspace{-.5em}
\end{table}

\textbf{Component sensitivity.} Table~\ref{tab:ablation} factorises the remaining design choices on Qwen3-4B-IT-2507 along three knobs: \emph{Div} (JSD vs reverse-KL ascent), the gate deactivation threshold $\tau_{\mathrm{down}}$, and \emph{Compose} (additive vs multiplicative shaping of the GRPO advantage). \emph{Threshold sensitivity is model-conditional.} Loosening $\tau_{\mathrm{down}}$ from $0.93$ to $0.90$ drops Q4 Avg by $8.3$ points ($62.8 \to 54.5$); on Qwen3-8B the same loosening leaves Avg essentially unchanged ($65.9$ vs canonical $65.7$; Appendix~\ref{app:ablation-qn}). The canonical $0.93$ is not a per-model sweet spot but the value that transfers across the models we evaluate. \emph{Additive composition outperforms multiplicative.} Replacing additive with multiplicative drops Avg by $6.3$ points ($62.8 \to 56.5$) and halves the speedup ($10\times \to 5\times$): when $A^{\mathrm{seq}}$ is small, the multiplicative form scales the AntiSD term down toward zero, removing the deliberation push exactly when GRPO's ORM signal is uninformative. The gate-off row's $60.6$ peak at step $30$ is a transient pre-collapse value (Figure~\ref{fig:fm}, brown), not a sustainable plateau.

\subsection{Beyond GRPO saturation}
\label{sec:exp:continual}
\vspace{-0.5em}

A practical question is whether AntiSD must be trained from the base model, or whether the advantage shaping can be applied as a refinement on top of an existing GRPO checkpoint. We resume the Qwen3-8B GRPO run at step $200$ -- the saturation point in Table~\ref{tab:main} -- and continue with the canonical AntiSD configuration for $50$ further steps. Optimizer state, dataloader index, and reference policy are inherited from the GRPO run via the standard verl~\citep{sheng2025hybridflow} resume path; the gate threshold is recalibrated against the new $H_{\mathrm{warm}}$ to account for the shifted teacher-entropy distribution of the saturated policy.

\begin{table}[ht]
  \caption{\textbf{Continual AntiSD on Qwen3-8B.} \emph{+AntiSD$^{\dag}$} resumes from GRPO@$200$; \emph{Steps} counts post-resume only. \emph{Speedup} as in Table~\ref{tab:main}. \textbf{Bold} = column best.}
  \vspace{-0.5em}
  \label{tab:continual}
  \centering
  \small
  \setlength{\tabcolsep}{5pt}
  \renewcommand{\arraystretch}{0.95}
  \begin{tabular}{l|c|ccccc|c|c}
    \toprule
    \textbf{Method}                       & \textbf{Steps} & \textbf{AIME24} & \textbf{AIME25} & \textbf{AIME26} & \textbf{HMMT25} & \textbf{Minerva} & \textbf{Average}                  & \textbf{Speedup}     \\
    \midrule
    GRPO                                  & 200            & 73.5            & 65.2            & 64.2            & 39.2            & 45.1             & 57.4\textsubscript{@200}          & 1.0$\times$          \\
    +AntiSD                               & 200            & \textbf{78.4}   & \textbf{73.4}   & \textbf{73.7}   & \textbf{54.4}   & 48.5             & \textbf{65.7\textsubscript{@180}} & \textbf{5.0$\times$} \\
    \rowcolor{antisdrow} +AntiSD$^{\dag}$ & +50            & 74.9            & 72.4            & 73.0            & 54.0            & \textbf{50.6}    & 65.0\textsubscript{@30}           & \textbf{5.0$\times$} \\
    \bottomrule
  \end{tabular}
  \vspace{-.5em}
\end{table}

Table~\ref{tab:continual} contrasts three policies: the saturated GRPO baseline at step $200$, AntiSD trained from base, and continual AntiSD that initialises from the GRPO checkpoint. Continual AntiSD essentially matches the from-base peak ($65.0$ vs $65.7$ Avg) using only $30$ post-resume steps -- $6\times$ fewer than the $180$ steps from-base AntiSD takes to reach its own peak -- demonstrating that AntiSD's advantage stacks on top of GRPO rather than replacing it: the deliberation-token reward signal remains informative even at the saturation point that GRPO's trajectory-level reward cannot push past. The same experiment on Qwen3-4B-IT-2507 (Appendix~\ref{app:continual-q4}) comes within $\sim 1$\,pp of the from-base peak before settling slightly lower, suggesting GRPO's basin admits most but not all of AntiSD's deliberation pressure.

\vspace{-0.5em}

\section{Related Work}
\label{sec:related}
\vspace{-0.5em}

\textbf{On-policy self-distillation.}
A series of recent works develops on-policy self-distillation along parallel axes \citep{opsd, sdpo, opcd-ye, opsdc, yang2026rlsd}, all using a teacher--student log-ratio in which the teacher is the student conditioned on privileged context (a verified solution and any environment feedback). These methods build on on-policy distillation \citep{gkd, fu2026revisitingonpolicydistillationempirical} (student-sampled rollouts with an external teacher) and learning under privileged information \citep{vapnik-lupi, lopez2015unifying}, and share the same gradient direction. AntiSD inverts that direction from the start of training, replacing descent on reverse KL with bounded ascent on Jensen--Shannon divergence; an auto-calibrated entropy-triggered gate then disables the term once the teacher's per-token entropy collapses.

\textbf{Diagnoses of self-distillation.}
Self-distillation has been diagnosed as degrading reasoning capability both directly \citep{kim2026does} and implicitly via its framing as a response-compression tool \citep{opsdc}. Existing self-distillation methods \citep{opsd, sdpo, opcd-ye, yang2026rlsd} report mainly on simpler benchmarks, not the AIME / HMMT-class problems where we observe the clearest failure. A broader line of on-policy distillation diagnostics \citep{fu2026revisitingonpolicydistillationempirical, li2026rethinking, li2026unifying, xu2026paceddistillationselfdistillationfrontier} documents teacher--student capability gaps and distribution mismatch without isolating self-distillation from external-teacher OPD. We confirm the same symptom across model families from 4B to 30B parameters (Section~\ref{sec:exp}), but trace it to a structural property of the per-token signal itself: under privileged context it is conditional pointwise mutual information between the next token and that context (Section~\ref{sec:antisd:direction}), which biases credit toward tokens the context already implies and away from deliberation tokens. AntiSD acts on this mechanism by reversing the per-token sign rather than by reweighting samples or filtering teacher confidence.

\textbf{Process reward models and reward shaping.}
To address sparse credit assignment in RLVR, a separate line of work trains process reward models that score intermediate reasoning steps \citep{prm800k, math-shepherd, luo2024improve, setlur2025rewarding, lee2026efficient}, either from human annotations or from Monte Carlo rollout estimates of step value, while implicit-reward methods such as PRIME \citep{cui2025process} derive process rewards from preference signals jointly with the policy. AntiSD's per-token signal is structurally a PRM, but a training-free one: it is the difference $V_t - V_{t-1}$ of the model's own log-posterior $V_t = \log P(c \mid x, y_{\le t})$ for the privileged context~$c$. This places the signal in the framework of potential-based reward shaping \citep{ng1999policy}: the per-token contributions telescope over a trajectory to the trajectory-level pointwise mutual information $\log P(c\mid x,y) - \log P(c\mid x)$, so the shaping term leaves the set of optimal policies invariant. The PRM is therefore obtained without auxiliary annotation, learned step-value heads, or Monte Carlo rollouts. Our contribution is to identify the conditional PMI bias of this shaping signal (inflating shortcut tokens, suppressing deliberation tokens) and correct it through gradient-direction inversion rather than as an additional reward channel.

\vspace{-0.5em}

\section{Conclusion}
\label{sec:conclusion}
\vspace{-0.5em}

We identified the per-token signal of default on-policy self-distillation as conditional pointwise mutual information between the next token and the privileged context, exposing a structural shortcut bias that rewards tokens the context already implies and penalises the deliberation tokens that drive search. Anti-Self-Distillation responds by inverting the gradient direction from the first training step, replacing reverse-KL descent with bounded Jensen--Shannon ascent; a single auto-calibrated, entropy-triggered gate then disables the term once the teacher's per-token entropy collapses, preventing the run-away drift that pure ascent would otherwise incur. Across five language models from $4$B to $30$B parameters, AntiSD reaches a strong GRPO baseline's accuracy in $2$ to $10\times$ fewer training steps and improves final accuracy by up to $11.5$ points; the gap between AntiSD and default self-distillation in Table~\ref{tab:main} is consistent with sign reversal carrying most of the gain on top of an already-bounded shape. The PMI characterisation describes individual gradient contributions rather than the global optimum of the combined objective, and our evaluation focuses on math reasoning, leaving extensions to multi-turn agentic settings and broader coding benchmarks as natural next directions. A fuller discussion of limitations and broader impacts is in Appendix~\ref{app:impacts}.

\bibliographystyle{plainnat}
\bibliography{refs}

\newpage
\appendix

\begin{center}
  \noindent\rule{\linewidth}{3pt}\\[6pt]
  {\Large\bfseries Anti-Self-Distillation for Reasoning RL via\\[0.25em] Pointwise Mutual Information\\[0.4em] Supplementary Material}\\[6pt]
  \noindent\rule{\linewidth}{1pt}
\end{center}
\vspace{1em}

\noindent\textbf{Table of Contents}
\vspace{0.3em}
\hrule
\vspace{0.5em}
\noindent
\hyperref[app:proofs]{\ref{app:proofs}}\hspace{1em}Proofs and Deferred Statements\dotfill\pageref{app:proofs}\\[0.3em]
\hyperref[app:hparams]{\ref{app:hparams}}\hspace{1em}Hyperparameters\dotfill\pageref{app:hparams}\\[0.3em]
\hyperref[app:prompts]{\ref{app:prompts}}\hspace{1em}Self-Teacher Context Examples\dotfill\pageref{app:prompts}\\[0.3em]
\hyperref[app:experiments]{\ref{app:experiments}}\hspace{1em}Additional Experiments\dotfill\pageref{app:experiments}\\[0.3em]
\hspace*{1.5em}\hyperref[app:ablation-qn]{\ref{app:ablation-qn}}\hspace{1em}Component sensitivity on Qwen3-8B\dotfill\pageref{app:ablation-qn}\\[0.3em]
\hspace*{1.5em}\hyperref[app:continual-q4]{\ref{app:continual-q4}}\hspace{1em}Continual AntiSD on Qwen3-4B-IT-2507\dotfill\pageref{app:continual-q4}\\[0.3em]
\hyperref[app:impacts]{\ref{app:impacts}}\hspace{1em}Limitations and Broader Impacts\dotfill\pageref{app:impacts}
\vspace{0.5em}
\hrule
\vspace{1.5em}

\section{Proofs and Deferred Statements}
\label{app:proofs}

This appendix collects the derivations summarised in Section~\ref{sec:antisd}: the reverse-KL gradient identity (Equation~\eqref{eq:thm1-grad}, Lemma~\ref{lem:rkl-grad}); the PMI characterization of $u_t$ under self-distillation (Equation~\eqref{eq:pmi}, Lemma~\ref{lem:pmi}) and its trajectory-level telescope into a potential-based shaping term (Lemma~\ref{lem:telescope}); the JSD f-divergence shape used in Equation~\eqref{eq:antisd-advantage} (Lemma~\ref{lem:jsd-shape}); and the properties of $\varphi$ relied on in Section~\ref{sec:antisd:stab} (Lemma~\ref{lem:phi-props}). All distributions are over the next-token vocabulary; we suppress the conditioning $(x, y_{<t})$ where it is fixed.

\begin{lemma}[Reverse-KL gradient identity, Equation~\eqref{eq:thm1-grad}]
  \label{lem:rkl-grad}
  Let $\pi_S(\cdot) = \pi_\theta(\cdot \mid x, y_{<t})$ and $\pi_T(\cdot) = \pi_\theta(\cdot \mid x, c, y_{<t})$ with stop-gradient on $\pi_T$, and write $u_v := \log \pi_T(v) - \log \pi_S(v)$. Then
  \begin{equation*}
    \nabla_\theta\, D_{\mathrm{KL}}(\pi_S \| \pi_T)
    \;=\; -\,\mathbb{E}_{v \sim \pi_S}\!\left[ u_v \cdot \nabla_\theta \log \pi_S(v) \right].
  \end{equation*}
\end{lemma}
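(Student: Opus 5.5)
The plan is a direct computation over the finite vocabulary $\mathcal{V}$, treating $\pi_T$ as a constant because of the stop-gradient. First I write the divergence as a finite sum,
\begin{equation*}
  D_{\mathrm{KL}}(\pi_S \| \pi_T) \;=\; \sum_{v \in \mathcal{V}} \pi_S(v)\,\big(\log \pi_S(v) - \log \pi_T(v)\big) \;=\; -\sum_{v} \pi_S(v)\, u_v .
\end{equation*}
I assume $\pi_S(v) > 0$ for all $v$, which holds for softmax outputs, so every logarithm is finite and differentiable in $\theta$. Because the sum is finite, $\nabla_\theta$ passes inside it with no regularity issues.

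Next I apply the product rule termwise. The summand $\pi_S(v)\,(\log\pi_S(v) - \log\pi_T(v))$ contributes two pieces. The first is $\nabla_\theta \pi_S(v)\cdot(\log\pi_S(v)-\log\pi_T(v))$. The second is $\pi_S(v)\cdot\nabla_\theta\log\pi_S(v) = \nabla_\theta \pi_S(v)$, since the stop-gradient gives $\nabla_\theta \log\pi_T(v) = 0$.

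Summing the second piece over $v$ gives $\nabla_\theta \sum_v \pi_S(v) = \nabla_\theta 1 = 0$; this is the score-function identity quoted before Equation~\eqref{eq:thm1-grad}.

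For the first piece I use the log-derivative trick $\nabla_\theta \pi_S(v) = \pi_S(v)\,\nabla_\theta \log \pi_S(v)$. This turns the surviving sum into
\begin{equation*}
  \sum_v \pi_S(v)\,\big(-u_v\big)\,\nabla_\theta \log\pi_S(v) \;=\; -\,\mathbb{E}_{v\sim\pi_S}\!\left[u_v\, \nabla_\theta \log \pi_S(v)\right],
\end{equation*}
which is the claim.

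There is no genuine obstacle; the delicate point is the bookkeeping of the stop-gradient. Inside the KL, $\log\pi_S(v)$ appears both in the weight $u_v$ and in the measure $\pi_S(v)$. The weight must be held fixed (evaluated, not differentiated) only in the final expectation, \emph{after} the product rule has already been applied to the $\log\pi_S$ inside $u_v$. That derivative is exactly the term killed by the normalization identity. I will state this explicitly, so the reader sees that treating $u_v$ as a constant advantage is a consequence of the computation and not an extra assumption.

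I will also remark that $\pi_T$ shares $\theta$ with $\pi_S$, so without the stop-gradient an additional term $-\mathbb{E}_{v\sim\pi_S}[\nabla_\theta\log\pi_T(v)]$ would appear; the lemma is stated for the stop-gradient version only.
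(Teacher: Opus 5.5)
Your proposal is correct and follows essentially the same route as the paper: expand the KL as a finite sum, split it with the product rule, and eliminate the $\pi_S\,\nabla_\theta\log\pi_S$ piece using the stop-gradient and the normalization identity $\sum_v \nabla_\theta \pi_S(v) = 0$; the log-derivative trick then turns the surviving sum into $-\mathbb{E}_{v\sim\pi_S}[u_v\,\nabla_\theta\log\pi_S(v)]$. Your side remarks are accurate but not needed for the lemma as stated: softmax outputs are positive, and without the stop-gradient there would be an extra $-\mathbb{E}_{v\sim\pi_S}[\nabla_\theta\log\pi_T(v)]$ term.
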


\begin{proof}
  Expanding $D := D_{\mathrm{KL}}(\pi_S \| \pi_T) = \sum_v \pi_S(v)\,(\log \pi_S(v) - \log \pi_T(v))$ and applying the product rule,
  \begin{equation*}
    \nabla_\theta D
    \;=\;
    \underbrace{\sum_v \nabla_\theta \pi_S(v)\,(\log \pi_S(v) - \log \pi_T(v))}_{\text{(I)}}
    \;+\;
    \underbrace{\sum_v \pi_S(v)\,\nabla_\theta (\log \pi_S(v) - \log \pi_T(v))}_{\text{(II)}}.
  \end{equation*}
  Term (II) vanishes: $\nabla_\theta \log \pi_T(v) = 0$ by stop-gradient, and $\sum_v \pi_S(v)\,\nabla_\theta \log \pi_S(v) = \sum_v \nabla_\theta \pi_S(v) = \nabla_\theta \sum_v \pi_S(v) = \nabla_\theta 1 = 0$. For term (I), the score-function identity $\nabla_\theta \pi_S(v) = \pi_S(v)\,\nabla_\theta \log \pi_S(v)$ gives
  \begin{equation*}
    \text{(I)} \;=\; \mathbb{E}_{v \sim \pi_S}\!\left[(\log \pi_S(v) - \log \pi_T(v))\,\nabla_\theta \log \pi_S(v)\right] \;=\; -\,\mathbb{E}_{v \sim \pi_S}\!\left[u_v \nabla_\theta \log \pi_S(v)\right],
  \end{equation*}
  where $\log \pi_S - \log \pi_T = -u$. Combining $\text{(I)} + \text{(II)}$ proves the claim.
\end{proof}

\begin{lemma}[PMI characterization, Equation~\eqref{eq:pmi}]
  \label{lem:pmi}
  Under the self-distillation parameter sharing $\pi_S(\cdot) = \pi_\theta(\cdot \mid x, y_{<t})$ and $\pi_T(\cdot) = \pi_\theta(\cdot \mid x, c, y_{<t})$,
  \begin{equation*}
    u_t \;=\; \log \frac{\pi_\theta(y_t \mid x, c, y_{<t})}{\pi_\theta(y_t \mid x, y_{<t})} \;=\; \mathrm{PMI}(y_t\,;\, c \mid x, y_{<t}).
  \end{equation*}
\end{lemma}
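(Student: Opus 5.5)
The plan is to prove Lemma~\ref{lem:pmi} by unfolding definitions. The only real content is identifying the shared network $\pi_\theta$ with a single conditional distribution, so that the conditional PMI definition applies to it. First, the first equality is a substitution. By the abbreviations of Section~\ref{sec:antisd:direction}, $u_t = t_t - s_t = \log \pi_T(y_t \mid x, y_{<t}) - \log \pi_S(y_t \mid x, y_{<t})$. Under the parameter-sharing hypothesis of the lemma, $\pi_T(\cdot) = \pi_\theta(\cdot \mid x, c, y_{<t})$ and $\pi_S(\cdot) = \pi_\theta(\cdot \mid x, y_{<t})$. Plugging these in and combining the logs gives the displayed ratio. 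Note that it is the parameter sharing that makes both terms conditionals of the same model, differing only in whether $c$ is in the context.

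Second, for the PMI equality, I will fix the conditional PMI convention
\[
\mathrm{PMI}(a; b \mid z) := \log \frac{p(a, b \mid z)}{p(a \mid z)\, p(b \mid z)} = \log \frac{p(a \mid b, z)}{p(a \mid z)},
\]
where $p$ is the joint model in which the probabilities are evaluated. I take $p := \pi_\theta$ with $a = y_t$, $b = c$ and $z = (x, y_{<t})$. The second form follows from one application of the chain rule, $p(a, b \mid z) = p(a \mid b, z)\, p(b \mid z)$, followed by cancelling $p(b \mid z)$. Matching this form term by term with the ratio from the first step closes the argument.

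The main obstacle is interpretive rather than computational. The chain-rule step needs $\pi_\theta(y_t \mid x, y_{<t})$ to be the marginal of $\pi_\theta(y_t \mid x, c, y_{<t})$ over $c$ under some joint $p(c \mid x, y_{<t})$. A language model queried with and without $c$ in its prompt need not satisfy this consistency exactly. I would handle this by stating the implicit assumption explicitly: treat $\pi_\theta$ as defining a coherent joint over $(c, y)$ given $x$, for example the model's own likelihood of $c$ followed by $y$. Under that assumption the identity holds by definition. Without it, the ratio-form expression $\log\big(\pi_\theta(y_t \mid x, c, y_{<t})/\pi_\theta(y_t \mid x, y_{<t})\big)$ should be taken as the operative definition of conditional PMI for the model.

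I would also note that, under the same joint, Bayes' rule rewrites $u_t$ as $\log P(c \mid x, y_{\le t}) - \log P(c \mid x, y_{<t})$. This is the form needed for the telescoping in Lemma~\ref{lem:telescope}, and it is a useful consistency check on the convention chosen here.
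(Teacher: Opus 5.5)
Your proposal is correct and takes essentially the paper's route: both proofs rest on one chain-rule/Bayes step under a joint over $(y_t, c)$ induced by $\pi_\theta$. The paper rewrites the ratio as $\pi_\theta(c \mid x, y_{\le t})/\pi_\theta(c \mid x, y_{<t})$, while you use the equivalent form $p(y_t \mid c, z)/p(y_t \mid z)$ of conditional PMI directly. Stating the coherence assumption explicitly, as you do, is something the paper leaves tacit. However, your example joint $\pi_\theta(c \mid x)\,\pi_\theta(y \mid x, c)$ does not by itself make the marginal $p(y_t \mid x, y_{<t})$ equal the model's no-$c$ prediction $\pi_\theta(y_t \mid x, y_{<t})$, so that assumption must be imposed as stated rather than derived from that construction.
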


\begin{proof}
  Bayes' rule applied to the joint of $(y_t, c)$ given $(x, y_{<t})$ gives
  \begin{equation*}
    \pi_\theta(y_t \mid x, c, y_{<t}) \;=\; \frac{\pi_\theta(c \mid x, y_{\le t})\,\pi_\theta(y_t \mid x, y_{<t})}{\pi_\theta(c \mid x, y_{<t})},
  \end{equation*}
  so $\frac{\pi_\theta(y_t \mid x, c, y_{<t})}{\pi_\theta(y_t \mid x, y_{<t})} = \frac{\pi_\theta(c \mid x, y_{\le t})}{\pi_\theta(c \mid x, y_{<t})}$, and taking logs yields the conditional pointwise mutual information $\mathrm{PMI}(y_t; c \mid x, y_{<t})$.
\end{proof}

\begin{lemma}[Trajectory-level potential shaping]
  \label{lem:telescope}
  Summing $u_t$ over a complete trajectory telescopes to a sequence-level pointwise mutual information:
  \begin{equation*}
    \sum_{t=1}^{T} u_t \;=\; \log \pi_\theta(c \mid x, y) - \log \pi_\theta(c \mid x) \;=\; \mathrm{PMI}(y\,;\, c \mid x).
  \end{equation*}
  Hence the per-token contributions $\{u_t\}$ are the increments of a potential $\Phi_t := \log \pi_\theta(c \mid x, y_{\le t})$, and the augmented advantage in Equation~\eqref{eq:adv-combined} is a potential-based reward shaping in the sense of \citet{ng1999policy}: it leaves the set of optimal policies invariant for any underlying scalar reward.
\end{lemma}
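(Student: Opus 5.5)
The argument reduces entirely to Lemma~\ref{lem:pmi} applied at every position, followed by a telescoping sum. For each $t$, the proof of Lemma~\ref{lem:pmi} gives
\[
u_t = \log \pi_\theta(c \mid x, y_{\le t}) - \log \pi_\theta(c \mid x, y_{<t}).
\]
Writing $\Phi_t := \log \pi_\theta(c \mid x, y_{\le t})$, with the convention that $y_{\le 0}$ is the empty prefix so $\Phi_0 = \log \pi_\theta(c \mid x)$, this reads $u_t = \Phi_t - \Phi_{t-1}$. Summing over $t=1,\dots,T$ collapses to $\Phi_T - \Phi_0 = \log\pi_\theta(c\mid x,y) - \log\pi_\theta(c\mid x)$. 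A second application of Bayes' rule at the sequence level rewrites this as $\log \pi_\theta(y\mid x,c) - \log\pi_\theta(y\mid x) = \mathrm{PMI}(y;c\mid x)$. The chain rule for $\pi_\theta(y\mid x,c)=\prod_t \pi_T(y_t\mid\cdot)$ checks this identity directly.

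The one point needing care is that the quantities $\pi_\theta(c\mid x,y_{\le t})$ must be well defined. The language model defines conditionals over tokens given a context, but it need not define a coherent joint over $(c,y)$ with the same conditionals. I would therefore state the lemma under the assumption that there is a single joint distribution $\pi_\theta(c,y\mid x)$ whose conditionals are $\pi_S$ and $\pi_T$. This is the same implicit assumption already used in Lemma~\ref{lem:pmi}, so I would invoke it rather than re-derive it. Without that assumption, I would instead define $\Phi_t$ operationally by the right-hand side, $\Phi_t := \sum_{s\le t} u_s$, which makes the telescoping tautological. The sequence-level PMI identity still holds via the chain-rule computation, because that computation uses only the two autoregressive factorizations.

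The shaping claim is where I expect the real obstacle. Ng et al.'s theorem concerns rewards of the form $F = \gamma\Phi(s') - \Phi(s)$, with $\Phi$ a fixed function of the MDP state. Here I would take the state to be $(x,y_{<t})$, the action to be $y_t$, and $\gamma=1$ over a finite horizon. Then $u_t = \Phi(s_{t+1}) - \Phi(s_t)$, and the shaped return is the original return plus $\Phi(s_T) - \Phi(s_0)$.

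The difficulty is the terminal term $\Phi_T = \log\pi_\theta(c\mid x,y)$. Unlike $\Phi_0$, it depends on the chosen trajectory, so invariance holds only if $\Phi_T$ is a constant, for instance by setting the potential of terminal states to zero as in Ng et al.'s episodic treatment. The subtler issue is that $\Phi$ depends on $\theta$ and therefore changes as the policy is trained, so it is not a fixed potential of the MDP.

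My plan is to prove invariance for a frozen $\theta_0$ defining $\Phi$, with the terminal potential set to zero. Under those conditions the per-step shaping sums to the constant $-\Phi_0$, and the $\arg\max$ over policies is unchanged. I would then note explicitly that the unqualified statement ``for any underlying scalar reward'' requires these two provisos.
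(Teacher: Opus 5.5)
Your telescoping argument is the paper's proof: Lemma~\ref{lem:pmi} at each $t$ gives $u_t=\Phi_t-\Phi_{t-1}$, and the sum is $\Phi_T-\Phi_0$. Two of your additions are correct refinements the paper leaves implicit: the chain-rule check $\sum_t u_t=\log\pi_\theta(y\mid x,c)-\log\pi_\theta(y\mid x)$, and the remark that the middle expression presupposes a coherent joint over $(c,y)$.

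For the invariance clause, the paper only says it ``follows directly'' from Ng et al. You are right that it does not. Ng et al.\ need the telescoped shaping to be independent of the trajectory apart from the start-state potential, which is what a single absorbing state with fixed potential guarantees. The lemma's own first display says the telescoped sum is the trajectory-level PMI, which depends on $y$. Your observation that $\Phi$ also moves with $\theta$ is correct too.

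The gap is in your repair. ``Terminal potential set to zero'' is not a proviso under which the lemma holds. $\Phi_T=\log\pi_{\theta_0}(c\mid x,y)$ is fixed by the model, and zeroing it replaces the last increment $u_T$ by $-\Phi_{T-1}$. That is the same as appending a terminal reward $-\lambda\log\pi_{\theta_0}(c\mid x,y)$. What you would prove is invariance for that modified signal, not for $\{u_t\}$.

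For $\{u_t\}$ itself the clause is false even with $\theta$ frozen. The shaped return is $R(x,y)+\lambda\log\pi_{\theta_0}(c\mid x,y)-\lambda\log\pi_{\theta_0}(c\mid x)$. If $R$ is constant, every policy is optimal before shaping. After shaping, only policies supported on $\arg\max_y\log\pi_{\theta_0}(c\mid x,y)$ are optimal, whenever that function is non-constant in $y$. So you should state that the identity and the potential-increment claim hold, and that the invariance sentence is false as written rather than ``true under provisos.''

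Note also that Eq.~\eqref{eq:adv-combined} uses $\lambda u_t$ as the advantage at position $t$ alone. The policy gradient of a shaped return would instead use the reward-to-go $\lambda\sum_{s\ge t}u_s$. The term $\mathbb{E}_{y_t\sim\pi_S}[u_t\nabla_\theta\log\pi_S(y_t)]$ is minus the per-token KL gradient of Lemma~\ref{lem:rkl-grad}, and is generally nonzero. By contrast, a properly terminated potential term contributes only the state function $-\Phi_{t-1}$ to the reward-to-go, and hence nothing in expectation. So even your corrected invariance statement would not describe the update the method actually performs.
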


\begin{proof}
  By Lemma~\ref{lem:pmi}, $u_t = \log \pi_\theta(c \mid x, y_{\le t}) - \log \pi_\theta(c \mid x, y_{<t}) = \Phi_t - \Phi_{t-1}$. The sum telescopes to $\Phi_T - \Phi_0 = \log \pi_\theta(c \mid x, y) - \log \pi_\theta(c \mid x)$. The potential-based shaping invariance result follows directly~\citep{ng1999policy}.
\end{proof}

\begin{lemma}[JSD f-divergence shape, Equation~\eqref{eq:antisd-advantage}]
  \label{lem:jsd-shape}
  Write the symmetric Jensen--Shannon divergence in f-divergence form $D_{\mathrm{JSD}}(\pi_S \| \pi_T) = \mathbb{E}_{\pi_T}[f(\pi_S / \pi_T)]$ with generator $f(r) = \tfrac12 r \log \frac{2r}{1+r} + \tfrac12 \log \frac{2}{1+r}$. Then $f'(r) = \tfrac{1}{2} \log \frac{2r}{1+r}$ and, for $r = \pi_S/\pi_T = e^{-u}$,
  \begin{equation*}
    f'\!\left(\tfrac{\pi_S}{\pi_T}\right) \;=\; \tfrac{1}{2}\!\left(\log 2 - \mathrm{softplus}(u)\right) \;=\; -\varphi(u),
  \end{equation*}
  recovering the AntiSD advantage $A_t^{\mathrm{AntiSD}} = -\varphi(u_t)$ via the score-function identity in Equation~\eqref{eq:fdiv-grad}.
\end{lemma}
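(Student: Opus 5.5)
The plan is a direct calculation in three steps: confirm that the stated generator really produces the Jensen--Shannon divergence, differentiate it, and then substitute $r = e^{-u}$. After that, the connection to the advantage follows from the same score-function argument used in Lemma~\ref{lem:rkl-grad}.

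\emph{Step 1: the generator gives JSD.} I would check this first, so that the claimed $f$ is justified and not just asserted. Plug $r = \pi_S(v)/\pi_T(v)$ into $\sum_v \pi_T(v) f(\pi_S(v)/\pi_T(v))$. The first term of $f$ becomes $\tfrac12 \sum_v \pi_S \log \frac{2\pi_S}{\pi_S+\pi_T}$. The second term becomes $\tfrac12 \sum_v \pi_T \log \frac{2\pi_T}{\pi_S+\pi_T}$. With $M = \tfrac12(\pi_S+\pi_T)$, the sum is exactly $\tfrac12 D_{\mathrm{KL}}(\pi_S\|M) + \tfrac12 D_{\mathrm{KL}}(\pi_T\|M)$, which is the symmetric JSD. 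I would also record two facts for completeness. First, $f(1)=0$. Second, $f''(r) = \tfrac12\big(\tfrac1r - \tfrac{1}{1+r}\big) > 0$, so $f$ is a valid convex generator; the main text appeals to this convexity for the monotonicity of $\varphi$.

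\emph{Step 2: the derivative of $f$.} Apply the product rule to $\tfrac12 r\log\frac{2r}{1+r}$. This gives $\tfrac12\log\frac{2r}{1+r} + \tfrac12 r\big(\tfrac1r - \tfrac1{1+r}\big)$. The second term of $f$ contributes $-\tfrac{1}{2(1+r)}$. The non-logarithmic pieces are $\tfrac12\big(1 - \tfrac{r}{1+r} - \tfrac{1}{1+r}\big)$, which equals $0$. Hence $f'(r) = \tfrac12\log\frac{2r}{1+r}$. This cancellation is the only step where an algebra slip is likely, so I would redo it carefully.

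\emph{Step 3: substitute $r = e^{-u}$.} Multiplying numerator and denominator by $e^{u}$ gives $\frac{2e^{-u}}{1+e^{-u}} = \frac{2}{1+e^{u}}$. Therefore $f'(e^{-u}) = \tfrac12\big(\log 2 - \log(1+e^u)\big) = \tfrac12(\log 2 - \mathrm{softplus}(u)) = -\varphi(u)$, using the definition in Equation~\eqref{eq:antisd-advantage}.

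\emph{Connecting to the advantage.} I would verify Equation~\eqref{eq:fdiv-grad} in the same way as Lemma~\ref{lem:rkl-grad}. Because of the stop-gradient, $\pi_T$ is treated as constant, so
\[
\nabla_\theta \sum_v \pi_T f(\pi_S/\pi_T) = \sum_v f'(\pi_S/\pi_T)\,\nabla_\theta\pi_S(v) = \mathbb{E}_{v\sim\pi_S}\big[f'(\pi_S/\pi_T)\nabla_\theta\log\pi_S(v)\big].
\]
Unlike the KL case, no extra baseline term appears here. Gradient ascent on $D_{\mathrm{JSD}}$ therefore assigns to each sampled token the policy-gradient weight $+f' = -\varphi(u_t)$, which is the claimed $A_t^{\mathrm{AntiSD}}$.

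None of these steps is a real obstacle. The only care points are the cancellation in Step 2 and keeping the orientation $r=\pi_S/\pi_T$ (rather than its inverse) consistent throughout.
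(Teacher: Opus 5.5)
Your proposal is correct and follows the paper's proof: the same product-rule differentiation in which the non-logarithmic terms cancel to give $f'(r)=\tfrac12\log\frac{2r}{1+r}$, then the substitution $r=e^{-u}$. Your Step 1 (showing the generator yields JSD, with $f(1)=0$ and $f''>0$) and your chain-rule derivation of Equation~\eqref{eq:fdiv-grad} under stop-gradient are correct additions; the paper asserts both rather than proving them.
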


\begin{proof}
  Differentiating $f$ term by term, with $g(r) := \log\frac{2r}{1+r} = \log 2 + \log r - \log(1+r)$ and $g'(r) = \frac{1}{r} - \frac{1}{1+r} = \frac{1}{r(1+r)}$, the product rule gives
  \begin{equation*}
    f'(r) = \tfrac12 g(r) + \tfrac12 r\,g'(r) - \tfrac12 \cdot \tfrac{1}{1+r} = \tfrac12\!\left[g(r) + \tfrac{1}{1+r} - \tfrac{1}{1+r}\right] = \tfrac12 \log \tfrac{2r}{1+r}.
  \end{equation*}
  Substituting $r = e^{-u}$ gives $\frac{2r}{1+r} = \frac{2}{e^u+1}$, so $f'(e^{-u}) = \tfrac12(\log 2 - \log(1+e^u)) = \tfrac12(\log 2 - \mathrm{softplus}(u)) = -\varphi(u)$.
\end{proof}

\begin{lemma}[Properties of $\varphi$]
  \label{lem:phi-props}
  The shape $\varphi(u) := \tfrac12(\mathrm{softplus}(u) - \log 2)$ from Equation~\eqref{eq:antisd-advantage} satisfies:
  \begin{enumerate}
    \item[\textnormal{(i)}] \emph{Strict monotonicity:} $\varphi'(u) = \tfrac{1}{2}\sigma(u) > 0$, where $\sigma(u) = (1+e^{-u})^{-1}$ is the logistic function.
    \item[\textnormal{(ii)}] \emph{Sign preservation:} $\varphi(0) = 0$ and $\mathrm{sign}(\varphi(u)) = \mathrm{sign}(u)$.
    \item[\textnormal{(iii)}] \emph{One-sided bound:} $\varphi(u) \ge -\tfrac{1}{2}\log 2$ for all $u \in \mathbb{R}$, with equality attained as $u \to -\infty$; $\varphi$ is unbounded above.
  \end{enumerate}
\end{lemma}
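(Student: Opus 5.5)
The plan is to verify the three properties directly from the closed form $\varphi(u) = \tfrac12(\mathrm{softplus}(u) - \log 2)$ with $\mathrm{softplus}(u) = \log(1+e^u)$, reducing everything to elementary facts about $\log(1+e^u)$.

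\emph{Monotonicity (i).} Differentiate: $\frac{d}{du}\log(1+e^u) = \frac{e^u}{1+e^u} = \frac{1}{1+e^{-u}} = \sigma(u)$. Hence $\varphi'(u) = \tfrac12\sigma(u)$. Strict positivity follows because $e^{-u} \in (0,\infty)$ forces $\sigma(u) \in (0,1)$. As a side remark, this also gives $\varphi'(0) = \tfrac14$, the value used in Section~\ref{sec:antisd:stab}.

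\emph{Sign preservation (ii).} Evaluate $\varphi(0) = \tfrac12(\log 2 - \log 2) = 0$. Strict monotonicity from (i) then gives $\varphi(u) > \varphi(0) = 0$ for $u > 0$ and $\varphi(u) < 0$ for $u < 0$. So (ii) follows from (i) with no separate computation.

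\emph{One-sided bound (iii).} Since $1+e^u > 1$, we have $\mathrm{softplus}(u) > 0$, so $\varphi(u) > -\tfrac12\log 2$ for every $u$. As $u \to -\infty$, $e^u \to 0$ and $\mathrm{softplus}(u) \to 0$, so $\varphi(u) \to -\tfrac12\log 2$. The bound is therefore sharp, though it is an infimum approached in the limit rather than a value attained at any finite $u$. I would state it this way, reading ``equality attained as $u\to-\infty$'' as this limit. For unboundedness above, use $\mathrm{softplus}(u) \ge u$, which holds because $1+e^u > e^u$. This gives $\varphi(u) \ge \tfrac12(u - \log 2) \to \infty$.

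There is no real obstacle here. The only point needing care is the wording of (iii): the strict inequality holds at every finite $u$, and the stated ``equality'' refers only to the limiting value.
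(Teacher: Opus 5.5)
Your proposal is correct and follows the paper's proof essentially step for step: you differentiate softplus to get $\varphi'(u)=\tfrac12\sigma(u)>0$, derive sign preservation from $\varphi(0)=0$ plus strict monotonicity, and obtain the lower bound from $\mathrm{softplus}(u)>0$, whose infimum $0$ is approached as $u\to-\infty$. Your bound $\mathrm{softplus}(u)\ge u$ for unboundedness is a slightly more explicit version of the paper's asymptotic $\mathrm{softplus}(u)\sim u$, and your reading of ``equality attained'' as a limiting infimum matches the paper's own phrase ``equality in the limit.''
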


\begin{proof}
  \textnormal{(i)} $\frac{d}{du}\mathrm{softplus}(u) = \frac{e^u}{1+e^u} = \sigma(u)$, so $\varphi'(u) = \tfrac12 \sigma(u) > 0$ for all $u$.
  \textnormal{(ii)} $\mathrm{softplus}(0) = \log 2$, so $\varphi(0) = 0$; combined with (i), $\varphi$ is strictly increasing through $0$.
  \textnormal{(iii)} $\mathrm{softplus}(u) = \log(1 + e^u) > 0$ for all $u$, with $\inf_u \mathrm{softplus}(u) = 0$ attained as $u \to -\infty$, hence $\varphi(u) \ge -\tfrac{1}{2}\log 2$ with equality in the limit. As $u \to \infty$, $\mathrm{softplus}(u) \sim u \to \infty$, so $\varphi$ has no upper bound.
\end{proof}

Properties (i)--(ii) ensure the shape inherits the per-token sign structure of Equations~\eqref{eq:thm1-grad}--\eqref{eq:pmi}: $-\varphi$ flips that sign at the source, so deliberation tokens ($u_t < 0$) receive positive advantage and shortcut tokens ($u_t > 0$) receive negative advantage. Property (iii) bounds the AntiSD advantage at $\tfrac12 \log 2$ on the deliberation side -- the side that observation (O2) flagged as both over-sampled and heavy-tailed -- so the cap re-balances per-token gradient contributions against the lighter shortcut side. The entropy gate then disables the term entirely once $u_t$ degenerates into floor-level noise.

\section{Hyperparameters}
\label{app:hparams}

\begin{algorithm}[h]
  \caption{Anti-Self-Distillation (AntiSD) -- one training step.}
  \label{alg:antisd}
  \begin{algorithmic}[1]
    \Require Policy $\pi_\theta$, batch $\{(x_i, y_i, c_i)\}_{i=1}^B$ with sequence-level GRPO advantage $A_i^{\mathrm{seq}}$; hyperparameter $\lambda_{\max}$; gate state $g$ and calibrated threshold $\tau_{\mathrm{down}}$ (warmup baseline $H_{\mathrm{warm}}$).
    \For{each rollout $i$ and token $t \in \{1,\ldots,T_i\}$}
    \State $s_{i,t} \gets \log \pi_\theta(y_{i,t} \mid x_i, y_{i,<t})$ \Comment{student log-prob}
    \State $t_{i,t} \gets \mathrm{stopgrad}\!\left(\log \pi_\theta(y_{i,t} \mid x_i, c_i, y_{i,<t})\right)$ \Comment{teacher log-prob}
    \State $u_{i,t} \gets t_{i,t} - s_{i,t}$ \label{algline:u} \Comment{$= \mathrm{PMI}(y_{i,t};\, c_i \mid x_i, y_{i,<t})$, see Equation~\eqref{eq:pmi}}
    \State $\varphi_{i,t} \gets \tfrac{1}{2}(\mathrm{softplus}(u_{i,t}) - \log 2)$ \label{algline:phi} \Comment{JSD f-divergence advantage; see Equation~\eqref{eq:antisd-advantage}}
    \EndFor
    \State $H \gets \mathrm{median}_{i,t}\, H[\pi_T(\cdot \mid x_i, y_{i,<t})]$ \Comment{teacher entropy}
    \State Update gate: $g \gets 1$ if $H \ge H_{\mathrm{warm}}$, $g \gets 0$ if $H < \tau_{\mathrm{down}}$, else unchanged.
    \State $\lambda \gets g \cdot \lambda_{\max}$ \label{algline:lambda}
    \State $A_{i,t} \gets A_i^{\mathrm{seq}} - \lambda \cdot \mathrm{stopgrad}(\varphi_{i,t})$ \label{algline:advantage} \Comment{ascent on $D_{\mathrm{JSD}}(\pi_S \| \pi_T)$; advantage is treated as a constant weight (cf.\ Eq.~\eqref{eq:fdiv-grad})}
    \State Update $\theta$ via standard policy gradient using $\{A_{i,t}\}$.
  \end{algorithmic}
\end{algorithm}

We evaluate five language models: \textbf{Qwen3-8B}, \textbf{Qwen3-4B-Instruct-2507}, \textbf{Olmo-3-7B-Instruct}, \textbf{Olmo-3-7B-Think}, and \textbf{Qwen3-30B-A3B}. All models share the configuration below, with training at $32$K maximum sequence length; the only model-conditional knob is the evaluation-time maximum sequence length, which is doubled for the thinking-model variant (Olmo-3-7B-Think) to accommodate longer chains-of-thought. AntiSD adds the gate parameters in the bottom block; the auto-calibration of $H_{\mathrm{warm}}$ and $\tau_{\mathrm{down}}$ is described in Section~\ref{sec:exp} (Setup).

\begin{table}[h]
  \centering
  \caption{Training and evaluation hyperparameters. AntiSD shares the GRPO/SD configuration and adds only the gate parameters in the bottom block.}
  \label{tab:hparams}
  \setlength{\tabcolsep}{4pt}
  \small
  \begin{tabular}{ll}
    \toprule
    \textbf{Hyperparameter}                              & \textbf{Value}                                            \\
    \midrule
    Optimizer                                            & AdamW                                                     \\
    Learning rate                                        & $1 \times 10^{-6}$                                        \\
    Training steps                                       & 200                                                       \\
    Batch size (problems)                                & 32                                                        \\
    Rollouts per problem (group size)                    & 8                                                         \\
    Max sequence length (training)                       & 32K (all models)                                          \\
    Max sequence length (evaluation)                     & 32K (Qwen, OLMo-Instruct), 64K (OLMo-Think)               \\
    GRPO clip ratio                                      & 0.2                                                       \\
    KL penalty coefficient                               & $0$ (no reference-policy KL term, following~\citep{dapo}) \\
    Reference model $\pi_{\text{ref}}$                   & frozen base model                                         \\
    Training rollout sampling                            & temperature $1.0$, top-$p\!=\!1.0$                        \\
    Evaluation rollout sampling                          & temperature $0.7$, top-$p\!=\!0.95$                       \\
    Evaluation rollouts per problem                      & 32 (AIME, HMMT) / 4 (MinervaMath)                         \\
    Hardware                                             & 8 NVIDIA H20 GPUs per node (multi-node for Qwen3-30B-A3B) \\
    Framework                                            & verl~\citep{sheng2025hybridflow}                          \\
    \midrule
    AntiSD mixing weight $\lambda_{\max}$                & 0.5                                                       \\
    AntiSD warmup length $W$                             & 5 steps                                                   \\
    AntiSD deactivation threshold $\tau_{\mathrm{down}}$ & $0.93 \cdot H_{\mathrm{warm}}$                            \\
    AntiSD reactivation threshold                        & $H_{\mathrm{warm}}$                                       \\
    \bottomrule
  \end{tabular}
\end{table}

\section{Self-Teacher Context Examples}
\label{app:prompts}

\definecolor{sysblue}{HTML}{2B5797}
\definecolor{respgreen}{HTML}{1A7A4C}
\definecolor{fbredcolor}{HTML}{B91C1C}

We show the context that the self-teacher $\pi_T(\cdot \mid x, y_{<t}, c)$ sees when re-evaluating the student's response. The teacher's input concatenates the original prompt, a verified solution (sampled from a successful rollout in the same batch when available, else from the dataset), and a binary feedback string indicating correctness, followed by an instruction to re-solve the problem. The student's original response $y$ is placed in the assistant role; the teacher then re-evaluates $y$'s log-probabilities under this enriched context. Templates follow~\citep{sdpo,opsd} and are identical for math and code; only the verified-solution and feedback strings differ by task. Colors: {\color{sysblue}original prompt}, {\color{fbredcolor}feedback string} (a sub-component of $c$), {\color{respgreen}student response $y$ (re-evaluated by teacher)}.

\paragraph{Reading the template.} The block under \emph{Your previous attempt:} carries the verified solution (a peer rollout or a dataset reference); the \emph{Previous assessment:} line is the binary correctness feedback for the student's actual rollout $y$ that follows in the assistant turn, not for the verified solution shown above. The two slots therefore play complementary roles: the verified solution narrows the teacher's posterior toward the correct answer, while the assessment string indicates that the trajectory the teacher will now re-evaluate is the student's own (which may be wrong). The deliberate asymmetry between ``correct reference'' and ``incorrect attempt'' gives the teacher a contrastive signal, and matches the prompt structure used in prior on-policy self-distillation work~\citep{sdpo,opsd}.

\begin{tcolorbox}[breakable,colback=gray!3,colframe=gray!50,title={\small Math (AIME / HMMT / MinervaMath) --- self-teacher context},fonttitle=\bfseries\small,boxrule=0.5pt,arc=2pt,left=4pt,right=4pt,top=2pt,bottom=2pt]
  \small\ttfamily
  {\color{sysblue}\textbf{[User]}} {\color{sysblue}Solve the following math problem. Place the final answer in $\backslash$boxed\{\}.}\\
  {\color{sysblue}Find the number of integer pairs $(a,b)$ with $1 \le a,b \le 100$ such that $\gcd(a,b)+\mathrm{lcm}(a,b)=a+b+50$.}\\[3pt]
  Your previous attempt:\\
  \textrm{\textit{(a successful rollout from the same batch, or a reference solution from the dataset)}}\\
  Let $d=\gcd(a,b)$, $a=dm$, $b=dn$\ldots so $d(1+mn)=dm+dn+50$\ldots\\
  $\backslash$boxed\{$25$\}\\[3pt]
  {\color{fbredcolor}Previous assessment: Your answer is incorrect.}\\[3pt]
  Now solve this problem step by step.\\[3pt]
  {\color{respgreen}\textbf{[Assistant]}} {\color{respgreen}\textit{(student's original response $y$, re-evaluated by teacher)}}\\
  {\color{respgreen}We start by writing $a=dm$, $b=dn$ with $\gcd(m,n)=1$\ldots}\\
  {\color{respgreen}$\backslash$boxed\{$30$\}}
\end{tcolorbox}

\begin{tcolorbox}[breakable,colback=gray!3,colframe=gray!50,title={\small Code (LiveCodeBench v6 / Dolci-RLZero) --- self-teacher context},fonttitle=\bfseries\small,boxrule=0.5pt,arc=2pt,left=4pt,right=4pt,top=2pt,bottom=2pt]
  \small\ttfamily
  {\color{sysblue}\textbf{[User]}} {\color{sysblue}You are a coding expert. Write a correct Python function that solves the following problem; place the final code in a $\backslash$texttt\{```python\} block.}\\
  {\color{sysblue}Given a list of integers \texttt{nums}, partition it into the minimum number of strictly increasing subsequences and return that number.}\\[3pt]
  Your previous attempt:\\
  \textrm{\textit{(a successful rollout from the same batch, or a reference implementation)}}\\
  \texttt{from bisect import bisect\_left}\\
  \texttt{def partitions(nums): tails = []; for x in nums: \ldots; return len(tails)}\\[3pt]
  {\color{fbredcolor}Previous assessment: Your code passes 7 of 12 test cases.}\\[3pt]
  Now solve this problem step by step.\\[3pt]
  {\color{respgreen}\textbf{[Assistant]}} {\color{respgreen}\textit{(student's original response $y$, re-evaluated by teacher)}}\\
  {\color{respgreen}We track the tail of each subsequence in sorted order\ldots}\\
  {\color{respgreen}\texttt{def partitions(nums): \ldots}}
\end{tcolorbox}

The feedback string is the only task-conditional piece. For math we use a binary form (``Your answer is correct.'' / ``Your answer is incorrect.''); for code, we use a continuous form (``Your code passes $N$ of $M$ test cases.'') matching the per-test fraction returned by the reward function. Both forms parallel the task's underlying score: math is exact-match boolean, while code's score is a fraction over executed test cases.

\section{Additional Experiments}
\label{app:experiments}

This section gathers experimental results referenced from the main paper but deferred to the appendix for space. Each subsection corresponds to one of the experimental probes whose narrative is summarised in Section~\ref{sec:exp}.

\subsection{Component sensitivity on Qwen3-8B}
\label{app:ablation-qn}

Table~\ref{tab:ablation-qn} mirrors the Qwen3-4B-IT-2507 ablation in Section~\ref{sec:exp:ablation} on Qwen3-8B. Two patterns from the main analysis carry over: \emph{rev.\ KL ascent collapses} ($30.6$ Avg, $-35.1$ pp from canonical) and \emph{the gate is necessary on this model family} (the no-gate run shows a transient peak at step $\sim 40$ before collapsing by step $\sim 90$, echoing the dynamics in Section~\ref{sec:exp:dyn}). The threshold-sensitivity story differs in direction: loosening from $0.93$ to $0.90$ slightly \emph{improves} Avg on Qwen3-8B ($65.7 \to 65.9$), in stark contrast to the $-8.3$\,pp drop on Qwen3-4B-IT-2507. Tightening to $0.95$ slightly drops the peak ($65.7 \to 65.4$) and slows ignition by $\sim 4\times$. The canonical $0.93$ is therefore not the per-model optimum on Qwen3-8B but the value that transfers across all models we evaluate without per-model retuning.

\begin{table}[t]
  \caption{\textbf{Component sensitivity on Qwen3-8B.} Same format as Table~\ref{tab:ablation}. \emph{Speedup} uses GRPO's best-Avg step ($200$). \textbf{Bold} = column best among +AntiSD rows.}
  \label{tab:ablation-qn}
  \centering
  \small
  \setlength{\tabcolsep}{1.5pt}
  \renewcommand{\arraystretch}{1.0}
  \begin{tabular}{l|ccc|cccccc|c}
    \toprule
    \textbf{Method}          & \textbf{Div} & $\boldsymbol{\tau_{\mathrm{down}}}$ & \textbf{Compose}             & \textbf{AIME24} & \textbf{AIME25} & \textbf{AIME26} & \textbf{HMMT25} & \textbf{Minerva} & \textbf{Average}                                   & \textbf{Speedup}     \\
    \midrule
    GRPO                     & --           & --                                  & --                           & 73.5            & 65.2            & 64.2            & 39.2            & 45.1             & 57.4\textsubscript{@200}                           & 1.0$\times$          \\
    \midrule
    \multirow{7}{*}{+AntiSD} & rev.~KL      & $0.93$                              & add.                         & 40.1            & 30.5            & 26.9            & 14.9            & 40.7             & 30.6\textsubscript{@200}                           & $\times$             \\
    \cmidrule{2-11}
                             & JSD          & none                                & add.                         & 75.5            & 68.7            & 69.2            & 45.9            & 48.7             & 61.6\textsubscript{@40}\textsuperscript{$\dagger$} & \textbf{6.7$\times$} \\
                             & JSD          & $0.90$                              & add.                         & 77.8            & 73.2            & 73.2            & \textbf{57.0}   & 48.2             & \textbf{65.9}\textsubscript{@100}                  & \textbf{6.7$\times$} \\
                             & JSD          & $0.95$                              & add.                         & 76.3            & \textbf{73.4}   & \textbf{75.8}   & 51.9            & 49.7             & 65.4\textsubscript{@180}                           & 1.4$\times$          \\
    \cmidrule{2-11}
                             & JSD          & $0.93$                              & mult.                        & 73.1            & 60.6            & 61.7            & 38.5            & 44.6             & 55.7\textsubscript{@130}                           & $\times$             \\
                             & JSD          & $0.93$                              & add.\textsuperscript{$\ast$} & 78.1            & 72.7            & 73.2            & 52.3            & \textbf{50.9}    & 65.4\textsubscript{@40}                            & \textbf{6.7$\times$} \\
    \rowcolor{antisdrow}{}   & {JSD}        & {0.93}                              & {add.}                       & \textbf{78.4}   & \textbf{73.4}   & 73.7            & 54.4            & 48.5             & 65.7\textsubscript{@180}                           & 5.0$\times$          \\
    \bottomrule
  \end{tabular}
  \par\smallskip
  {\footnotesize $\dagger$ The no-gate run on Qwen3-8B collapses by step $\sim 90$ (cf.\ Section~\ref{sec:exp:dyn}); the reported peak is from $8$ pre-collapse checkpoints. $\ast$ Gate signal swapped from teacher- to student-perplexity.}
\end{table}

\subsection{Continual AntiSD on Qwen3-4B-IT-2507}
\label{app:continual-q4}

We repeat the continual experiment from Section~\ref{sec:exp:continual} on Qwen3-4B-IT-2507. Unlike Qwen3-8B, where continual AntiSD essentially closes the gap to from-base AntiSD ($65.0$ vs $65.7$ Avg), continual on the smaller model peaks at $61.9$ briefly at step $+20$ before drifting to a plateau of $\approx 60.5$; the plateau is $2.3$\,pp short of the from-base peak ($62.8$).

\begin{table}[h]
  \caption{\textbf{Continual AntiSD on Qwen3-4B-IT-2507.} Same setup as Table~\ref{tab:continual} on the smaller model. \textbf{Bold} = column best.}
  \label{tab:continual-q4}
  \centering
  \small
  \setlength{\tabcolsep}{6pt}
  \renewcommand{\arraystretch}{1.0}
  \begin{tabular}{l|c|ccccc|c|c}
    \toprule
    \textbf{Method}                       & \textbf{Steps} & \textbf{AIME24} & \textbf{AIME25} & \textbf{AIME26} & \textbf{HMMT25} & \textbf{Minerva} & \textbf{Average}                  & \textbf{Speedup}      \\
    \midrule
    GRPO                                  & 200            & 67.8            & 57.7            & 63.5            & 34.1            & 33.2             & 51.3\textsubscript{@200}          & 1.0$\times$           \\
    +AntiSD                               & 200            & \textbf{76.6}   & \textbf{70.2}   & \textbf{74.4}   & \textbf{46.7}   & 46.4             & \textbf{62.8\textsubscript{@100}} & \textbf{10.0$\times$} \\
    \rowcolor{antisdrow} +AntiSD$^{\dag}$ & +50            & 76.2            & 69.7            & 73.3            & 43.2            & \textbf{47.0}    & 61.9\textsubscript{@20}           & \textbf{10.0$\times$} \\
    \bottomrule
  \end{tabular}
\end{table}

The plateau is consistent with two interpretations: (i) GRPO's basin admits some, but not all, of the deliberation pressure that AntiSD applies from base; (ii) the auto-recalibrated gate threshold derived from the saturated policy's $H_{\mathrm{warm}}$ is mildly conservative on this model, leaving residual AntiSD gain unrealised. We do not attempt to disentangle these here; the practical takeaway is that continual AntiSD provides a strong fraction of the from-base improvement at a fraction of the cost, with a model-conditional ceiling.

\section{Limitations and Broader Impacts}
\label{app:impacts}

\paragraph{Scope and extensions.} The conditional-PMI account in Section~\ref{sec:antisd:direction} is a local, per-step characterization of the per-token signal rather than a global-optimum statement about the combined objective; understanding the long-horizon dynamics under the full ascent + gate update is itself an interesting question. Our evaluation spans five language models from the Qwen3 and Olmo-3 families ($4$B--$30$B parameters) on mathematical reasoning, with an initial probe on code reasoning (Section~\ref{sec:exp:main}, Table~\ref{tab:code}). Several natural extensions follow the same algorithmic skeleton (Algorithm~\ref{alg:antisd}) without modifying the AntiSD update: \emph{(i)} multi-turn agentic settings, where the reward depends on a sequence of tool calls rather than a single rollout, with the privileged context spanning the full interaction trace; \emph{(ii)} broader code-reasoning benchmarks such as LiveCodeBench~v6 with longer per-problem horizons and richer test-case structure; and \emph{(iii)} richer privileged-context content -- process-level critiques, partial-credit annotations, and rationale-comparison rankings -- replacing the binary or continuous correctness feedback used here. Larger model scales beyond $30$B and multimodal conditioning are also natural settings to test whether the conditional-PMI characterization remains the dominant credit-assignment signal.

\paragraph{Broader impacts.} AntiSD is a post-training method that improves credit assignment in RLVR. Positive impacts: stronger open-weight reasoning models, lower training cost ($2$ to $10\times$ fewer steps to reach a given accuracy), and a clearer theoretical handle on why default self-distillation under-performs on math reasoning, which may inform future training-free PRM designs. Negative impacts: as with any improvement to LLM reasoning, gains are dual-use; a stronger reasoning model can be applied to adversarial or harmful tasks. AntiSD does not introduce a new attack surface beyond the pre-existing dual-use profile of large language models, and we do not release new high-risk model artifacts. We see no specific path to fairness, privacy, or safety harms beyond those already attaching to the underlying open-weight base models.

\end{document}